\newcommand{\blind}{1}
\newtheorem{theorem}{Theorem}
\newtheorem{lemma}{Lemma}
\newtheorem{corollary}{Corollary}
\newtheorem{definition}{Definition}
\newenvironment{example}
{\pushQED{\qed}\examplex}
{\popQED\endexamplex}
\begin{document}
	
	\def\spacingset#1{\renewcommand{\baselinestretch}%
		{#1}\small\normalsize} \spacingset{1}


	\if1\blind
	{
		\title{\bf An Analysis of Loss Functions for Binary Classification and Regression}
		\author{Jeffrey S Buzas \\ 
			Department of Mathematics and Statistics, \\
			University of Vermont, 82 University Place, Burlington, VT 05401-1455
			}
		\maketitle
	} \fi
	
	\if0\blind
	{
		\bigskip
		\bigskip
		\bigskip
		\begin{center}
			{\LARGE\bf An Analysis of Loss Functions for Binary Classification and Regression}
		\end{center}
		\medskip
	} \fi
	
	\bigskip
	\begin{abstract}
	This paper explores connections between margin-based loss functions and consistency in binary classification and regression applications. It is shown that a large class of margin-based loss functions for 
	binary classification/regression result in estimating scores equivalent to log-likelihood scores weighted by an even function.  A simple characterization for conformable (consistent) loss functions is given, which allows for straightforward comparison of different losses, including exponential loss,
	logistic loss, and others.  The characterization is used to construct a new Huber-type loss function for the
	logistic model.   
	A simple relation between the margin and standardized logistic regression residuals
	is derived, demonstrating that all margin-based loss can be viewed as loss functions of squared standardized logistic regression residuals.  The relation provides new, straightforward interpretations for exponential and logistic loss, and aids in understanding why exponential loss is sensitive to outliers. In particular, it is shown that minimizing empirical  exponential loss is equivalent to minimizing the sum of squared standardized logistic regression residuals. The relation also provides new insight into the AdaBoost algorithm. 
	\end{abstract}
	
	\noindent%
	{\it Keywords:}   AdaBoost, hard classifier, logistic regression, soft classifier
	\vfill
	
	\newpage
	\spacingset{1.9} 

\section{Introduction}

Binary classification is an important topic in statistical and machine learning. Margin-based loss functions form the backbone of non-parametric approaches to binary classification, and have been studied extensively over the last few decades. Excellent introductions are given in Bishop (2006)\nocite{Bishop2006} and Hastie et al. (2009)\nocite{ESL2009}.   In contrast,
model-based likelihood methods are typically used in regression applications for binary outcomes.  This paper provides a straightforward derivation for a simple characterization of the class of conformable (consistent),  margin-based loss functions. The characterization provides a direct method for comparing different
loss functions and constructing new ones.   It is shown that derivatives of loss functions in this class are equivalent to log-likelihood scores weighted by an even function, thereby establishing a strong connection between classification using margin-based loss and likelihood estimation.  

A simple algebraic relation is derived that establishes an equivalence between the margin and standardized logistic regression residuals (SLRRs).  The relation implies 
that all margin-based loss functions can be considered as loss functions of squared SLRRs.  This viewpoint provides interesting new perspectives on commonly used loss functions, including exponential loss, which underlies the AdaBoost algorithm.  It is shown that minimizing empirical exponential loss is equivalent to minimizing the sum of squared SLRRs. It is argued that AdaBoost can be interpreted as forward stage-wise regression where the objective function to be minimized is the sum of squared SLRR's, weighted by squared SLRR's from previous stages.  The relation between SLRRs and margins does not appear have been previously known.

An interesting approach for constructing loss functions based on ideas from the probability elicitation literature was given in Masnadi-Shirazi and Vasconcelos (2008) \nocite{SV2008} (see also Buja et al. (2006) \nocite{Buja2006}). 
In Masnadi-Shirazi and Vasconcelos (2015)\nocite{SV2015}, the same authors extended their construction to provide a characterization of loss functions consistent for what they term the generalized logit-link, with a focus on developing loss functions with strong regularization properties.   Their derivation used the fact the generalized logit-links are invertible.  The characterization of loss functions here
has a similar form to that given for the generalized logit-link in Masnadi-Shirazi and Vasconcelos (2015)\nocite{SV2015}. However, the characterization here
 is not restricted to generalized logit-links, nor does it require an invertible probability model. The derivation herein is straightforward, and, importantly, the characterization allows for direct comparison of loss functions using different paramaterizations.  Informative parameterizations 
of the characterization are explored, including in terms of the log-likelihood score.

\section{Notation and review of margin-based classification}

The following establishes notation, provides a review of some of the key elements in binary classification, and defines  the notion of conformable loss functions. 
Let $x\in \Re^p$ denote a feature vector and $y^*\in\{-1,1\}$ a binary class indicator. The joint probability distribution for $(x,y^*)$,
denoted $P$, is  not specified.  Interest is in finding a classification
rule $C(x)\in\{-1,1\}$ that minimizes classification error. Define the 0-1 loss function as 
$\phi_{\operatorname{0-1}}(y^*C(x))=I(y^*C(x)<0)=I(C(x)\ne y^*)$ where $I(\cdot)$ is the indicator function. The 0-1 loss function 
rewards/penalizes correct/incorrect classification.   The conditional  risk for 0-1 loss and a 
classifier $C(x)$ is then 
\begin{equation*}
	\begin{split}
R_\mathsf{C,\operatorname{0-1}} & =E[\phi_{\operatorname{0-1}}(y^*C(x))\mid x] \\
& =E[I({C}(x)\ne y^*)\mid x] = \phi_{\operatorname{0-1}}(C(x))p(x)+\phi_{\operatorname{0-1}}(-C(x))(1-p(x)) \\
&=I(C(x)\ne 1)p(x)+I(C(x)\ne -1)(1-p(x))=\mbox{Pr}(y^*\ne C(x)\mid x)  
\end{split}
\end{equation*}
where $p(x)\equiv\mbox{Pr}(y^*=1\mid x)$. 
The latter expression is the probability of classification error using the classifier $C(x)$.  It is well known that the  value of $C(x)$ minimizing the risk from
0-1 loss is the so-called Bayes
decision rule given by $C^*(x)=\mbox{sign}[2p(x)-1]$.  When $p(x)=1/2$, the classifier is indeterminant.  Here it is assumed this occurs with probability
zero and so can be ignored, thereby avoiding technicalities that can obscure the presentation while offering little insight.

The conditional distribution $\mbox{Pr}(y^*=1\mid x)$ is unknown as there are no distributional assumptions on $P$.  Therefore, classification rules are estimated non-parametrically from data using loss functions.  Classification functions estimated
from data  that converge to the optimal classifier $C^*(x)$ are called Bayes consistent.  This definition of consistency is analogous to that of Fisher consistency in parameter estimation, see \nocite{Lin2004} Lin (2004).

 Typically $C(x)$ takes the form $C(x)=\mbox{sign}[f(x)]$ where $f:\Re^p\rightarrow\Re$ is a function mapping the features $x$ to the real numbers.  The task is then to estimate $f(x)$.  

The margin $v$ is defined as the product $v=y^*f(x)$.  Note that $C(x)=\mbox{sign}[f(x)]$  correctly classifies an observation if and only if $v>0$.
Estimation of $f(x)$  using $0-1$ loss presents a difficult optimization problem because the loss is not smooth or convex (or quasi-convex), and the resulting classifiers can have poor finite sample properties, see Vapnik (1998) \nocite{vapnik1998}.   Therefore, continuous loss functions are used as surrogates for 0-1 loss.

  The available data are $n$ independent pairs $(x_1,y^*_1),\dots,(x_n,y^*_n)$ from the distribution $P$.  Estimation of $f(x)$ is typically accomplished by minimizing the empirical risk of a smooth margin-based loss function $\phi(v)$ by computing
\[
\hat f(x) =\arg\min_{f\in\mathcal{F}}\frac{1}{n}\sum_{i=1}^n\phi(v_i)
\]
where $\mathcal{F}$ is a large class of functions.

\begin{example}
Exponential loss, given by $\phi(v)=e^{-v/2}$, is commonly employed and is the basis of the AdaBoost algorithm.  Let $\mathcal{F}$ be the class of linear models, that is  
 $\mathcal{F}\equiv\{f(x): f(x)=x^T\beta,  \beta\in\Re^p\}$.  
The classifier estimated from independent pairs $(x_1,y^*_1),\dots,(x_n,y^*_n)$ is  $\hat C(x)=\mbox{sign}[\hat f(x)]$
where $\hat f(x) = x^T\hat\beta$, and
$\hat\beta$ minimizes the linear model/exponential loss empirical risk:
\[
\hat \beta =\arg\min_{\beta}\frac{1}{n}\sum_{i=1}^n e^{-y^*_i x_i^T\beta/2}.
\]

\end{example}

Linear models represent a rather restrictive class of functions for $\mathcal{F}$. A class of functions with enormous flexibility and widely used in applications is given by
\begin{equation}\label{classFunctions}
\mathcal{F}\equiv\left\{ f(x;\beta): f(x;\beta) = \sum_{m=1}^M \theta_m b(x; \gamma_m) \right\}
\end{equation}
where $\beta=\{\theta_m,\gamma_m\}_{m=1}^M$ are unknown parameters and $b(x:\gamma_m)$ are basis functions, see  Hastie et al. (2009, page 341) \nocite{ESL2009}.  

Population level properties of loss functions are important for understanding whether the resultant classification rules are Bayes consistent.    To study properties at the population level, let $\mathcal{F}$ represent the class of all measurable functions, and note the conditional risk function for margin-based loss is given by 
\[
R_\mathsf{C}(f(x)) = E[\phi(Y^*f(x))\mid x] = \phi(f(x))p(x)+\phi(-f(x))(1-p(x)).
\]   
 The optimal value for $f(x)$ at the population level is defined as
\[
f^*(x)=\arg\min_{f\in\mathcal{F}} R_\mathsf{C}(f(x)).
\]
It is often possible to optimize $R_\mathsf{C}(f(x))$ through differentiation. For a differentiable loss function $\phi(v)$, the identity $\frac{\partial}{\partial f} R_C(f)=0$ yields 
\begin{equation}\label{derivRisk}
\frac{\phi^\prime(-f^*(x))}{\phi^\prime(f^*(x))}=\frac{p(x)}{1-p(x)}
\end{equation}
where $\phi^\prime(v)=\frac{d}{dv} \phi(v)$.
Let $0\le G(v) \le 1$ be a continuous cumulative distribution function (CDF) such that $G(0)=1/2$.  If the derivative of the loss function  satisfies
\begin{equation}\label{definingeq}
	\frac{\phi^\prime(-v)}{\phi^\prime(v)} = \frac{G(v)}{1-G(v)} 
\end{equation}
then $f^*(x)$ is such that $G(f^*(x))=p(x)\equiv\mbox{Pr}(y^*=1\mid x)$, i.e. at the population level the conditional probability $\mbox{Pr}(y^*=1\mid x)$, which was unspecified, is determined by $G(v)$ and $f^*(x)$.  The preceding analysis
assumes that there does not exist $B\subset \Re^p$ where $\mbox{Pr}(x\in B)>0$ and such that $p(x)=1$ for $x\in B$.  In particular, the case where the features provide perfect prediction is excluded, i.e. when $p(x)=I(x\in B)$ where
$I(\cdot)$ is the indicator function.
  
\begin{definition}\label{ConsistentDef} {\bf Conformable loss functions}:  Let $\phi(v)$ be a differentiable, margin-based loss function and $G(x)$ a continuous CDF where $G(0)=1/2$.  If $\phi^\prime (v)$ satisfies \eqref{definingeq}, we say $\phi(v)$ is {\bf conformable }to $G(x)$.
\end{definition}

\bigskip

The definition is useful because it is easily seen that conformable loss functions yield classification rules that are Bayes consistent, i.e.  converging to the optimal rule $C^*(x)$.  To see this, note that  for a conformable loss $\phi(v)$, it follows that, at the population level,  $C(x)=\mbox{sign}[f^*(x)]=\mbox{sign}[2\mbox{Pr}(y^*=1\mid x)-1]=C^*(x)$.  Then
under standard regularity conditions, $\mbox{sign}[\hat f(x)]\rightarrow\mbox{sign}[f^*(x)]$, i.e. $\hat C(x)\rightarrow C(x) = C^*(x)$.  
An analogous result is established in in Section \ref{regressPerspective} below, namely that conformable loss functions produce consistent estimators in regression parameter estimation contexts.

Conformable loss functions also yield an approximate soft classifier. Soft classifiers directly model 
 $\mbox{Pr}(y^*=1\mid x)$ and then define the classifier as 
$C(x)=\mbox{sign}[2\mbox{Pr}(y^*=1\mid x)-1]$.  A loss function conformable for $G(x)$ provides 
an approximate soft classifier via $\mbox{Pr}(y^*=1\mid x)\approx G(\hat f(x))$.   

While conformable loss functions are Bayes consistent, not all loss functions that result in Bayes consistent classifiers need be conformable in the 
sense of Definition \ref{ConsistentDef}.   Several of the most widely used loss functions are conformable. Examples include exponential, logistic, and squared loss.  Hinge loss, used with support vector machines, is not conformable as it is not differentiable.  However, hinge loss is Bayes consistent.  Weak conditions under which a loss function is Bayes consistent are given in Lin (2004)\nocite{Lin2004}.

The definition of conformable loss functions is related to what  Shen (2005)\nocite{Shen2005} and Buja et al. (2006) \nocite{Buja2006} term `proper' loss functions.  Conformable loss functions are proper. 
The definition of conformable loss functions focuses on $G(v)$ and requires $\phi(\cdot)$ to be differentiable, but  does not require $G(\cdot)$ to be invertible, and the definition is  independent of $\mathcal{F}$.

In the next section, a characterization of the class of conformable loss functions is given.

\section{A characterization of the class of differentiable loss functions conformable for $G(x)$ }

Let $0\le G(x) \le 1$ be a continuous cumulative distribution function for a distribution that is symmetric around $1/2$, i.e. $G(x) = 1-G(-x)$.  This strengthens the requirement that $G(0)=1/2$.  The following derives a characterization of the class of non-decreasing, differentiable loss functions 
that are conformable for symmetric $G(x)$.   It is straightforward to generate new loss functions
using the characterization, and the characterization is particularly well-suited to understanding and comparing properties of loss functions that are conformable for the logistic distribution.    

   Motivated by \eqref{definingeq}, we seek to characterize the class of functions $\mathcal{C}_q$ defined by 
  $h(w)\in\mathcal{C}_q$ if and only if
\[
h(w)=q(-w) h(-w)
\]
 where $q(w)$ is a function with the property $q(w)q(-w)=1$.  Interest will be in $q(w)=G(w)/(1-G(w))$, but the following result is stated more generally.

\begin{lemma}\label{classlemma} $h(w)\in \mathcal{C}_q$ if and only if 
	\[
	h(w) = q(-w)^{1/2} g(w)
	\]
	where $g(w)$ is an even function.
\end{lemma}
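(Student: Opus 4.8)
The plan is to prove both directions by direct substitution, with the defining property $q(w)q(-w)=1$ doing all the work. First I would record its two immediate consequences. Since in the case of interest $q(w)=G(w)/(1-G(w))$ is a ratio of nonnegative quantities, $q(w)>0$, so the square roots $q(\pm w)^{1/2}$ are well-defined positive reals; and $q(w)q(-w)=1$ then gives the key identity $q(w)^{1/2}q(-w)^{1/2}=1$, equivalently $q(-w)^{1/2}=q(w)^{-1/2}$. This single identity is what every step below reduces to.

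For the ``if'' direction, I would assume $h(w)=q(-w)^{1/2}g(w)$ with $g$ even and substitute into the relation defining $\mathcal{C}_q$. Replacing $w$ by $-w$ gives $h(-w)=q(w)^{1/2}g(-w)=q(w)^{1/2}g(w)$, using evenness of $g$. Hence $q(-w)h(-w)=q(-w)q(w)^{1/2}g(w)$, and the key identity collapses $q(-w)q(w)^{1/2}=q(-w)^{1/2}$, so that $q(-w)h(-w)=q(-w)^{1/2}g(w)=h(w)$. This is exactly the condition $h(w)=q(-w)h(-w)$, i.e.\ $h\in\mathcal{C}_q$.

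For the ``only if'' direction, I would start from $h\in\mathcal{C}_q$ and define $g(w):=q(-w)^{-1/2}h(w)$, which is forced by the claimed form; it then remains only to check that $g$ is even. The crucial observation is that replacing $w$ by $-w$ in the defining relation $h(w)=q(-w)h(-w)$ yields $h(-w)=q(w)h(w)$ directly. Therefore $g(-w)=q(w)^{-1/2}h(-w)=q(w)^{-1/2}q(w)h(w)=q(w)^{1/2}h(w)$, and the key identity $q(w)^{1/2}=q(-w)^{-1/2}$ turns this into $g(-w)=q(-w)^{-1/2}h(w)=g(w)$, so $g$ is even, completing the proof.

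I expect no genuine obstacle here, since the content is entirely the bookkeeping around the square roots. The only point that needs care is ensuring $q(-w)^{1/2}$ is meaningful and that $q(w)^{1/2}q(-w)^{1/2}=1$ (rather than $\pm1$), which is where positivity of $q$, and hence of its square root, is used; this holds for the intended choice $q(w)=G(w)/(1-G(w))$.
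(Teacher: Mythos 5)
Your proposal is correct and follows essentially the same route as the paper's proof: both directions reduce to the identity $q(w)^{1/2}q(-w)^{1/2}=1$, and your $g(w)=q(-w)^{-1/2}h(w)$ is exactly the paper's $g(w)=q(w)^{1/2}h(w)$. The only difference is that you spell out the evenness check and the positivity of $q$ explicitly, which the paper leaves as ``easily seen.''
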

\begin{proof}
	Assume $h(w)\in \mathcal{C}_q$, i.e. that $h(w)=q(-w) h(-w)$. Then we can write $h(w) = q(-w)^{1/2} g(w)$ where
	$g(w)= q(w)^{1/2} h(w)$ and it is easily seen that $g(w)$ so defined is an even function.
	Conversely, if $h(w) = q(-w)^{1/2} g(w)$ where
	$g(w)$ is an even function, then 
	\[
	h(-w) = q(w)^{1/2}  g(-w) = q(w)^{1/2} g(w)=\frac{q(w)^{1/2}}{q(-w)^{1/2}}h(w)
	= q(w)h(w).
	\]
\end{proof}

Let $\mathcal{\tilde C}_q$ represent the class of functions $\mathcal{C}_q$ with the restriction that $g(w)$ is continuous,
 $g(w)\ge 0,$ and where 
 \begin{equation}\label{oddseq}
 q(w)=\frac{G(w)}{1-G(w)}.
 \end{equation}
The restriction $g(w)\ge 0$ is not mathematically necessary, but will result in loss functions that are non-increasing, a property which seems to make the most sense practically.

If $G(w)$ is interpreted as the conditional probability $\mbox{Pr}(y^*=1\mid w)$, $q(\cdot)$ represents the odds of an outcome conditional on $w$.  Useful properties to note are that $q(w)q(-w)=1$,  $\lim_{w\rightarrow\infty} q(w) = \infty$ and $\lim_{w\rightarrow -\infty} q(w) = 0$.

\begin{definition}\label{LG} 
 Denote by $\mathcal{L}_G$ a class of loss functions of the form 

\begin{equation}\label{lossfunction}
\phi(v) = k-\int_{0}^{v} h(w) dw =\begin{cases}
	k-\int_0^v h(w)dw, & \text{if $v>0$} \\
	k+\int_v^0 h(w)dw, & \text{if $v\le 0$}
\end{cases}
\end{equation}
where $k>0$ is an arbitrary constant and $h(w)\in\mathcal{\tilde C}_q$.  
\end{definition}
Each $\phi(v)\in\mathcal{L}_G$ is indexed by a weight function $g(w)$ that can be chosen independently from $G(w)$. The derivative of the loss is $\phi^\prime (v)=-h(v)=-q(-v)^{1/2}g(v)$, which is the product of the  square root of the reciprocal of the odds ($q(w)$) and the weight function $g(w)$.

\begin{theorem}\label{lossfuncthr} A margin-based loss function $\phi(v)$ is in the class $\mathcal{L}_G$ if and only if $\phi(v)$ is 
non-increasing,  differentiable, and is conformable to $G(x)$.

\end{theorem}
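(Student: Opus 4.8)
The plan is to prove the two implications of the biconditional separately, using the identification $h(v) = -\phi'(v)$ to pass between a loss function $\phi$ and the integrand $h$ appearing in \eqref{lossfunction}, and invoking Lemma \ref{classlemma} to convert membership in $\mathcal{C}_q$ into the explicit even-function factorization $h(w) = q(-w)^{1/2}g(w)$. Throughout, the relation \eqref{oddseq}, namely $q(w) = G(w)/(1-G(w))$, together with the identity $q(w)q(-w)=1$, is exactly what links the defining equation \eqref{definingeq} for conformability to membership in $\tilde{\mathcal{C}}_q$.

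For the forward direction, suppose $\phi \in \mathcal{L}_G$, so $\phi(v) = k - \int_0^v h(w)\,dw$ with $h \in \tilde{\mathcal{C}}_q$. Since $h$ is continuous, the fundamental theorem of calculus gives $\phi'(v) = -h(v)$, so $\phi$ is differentiable. Writing $h(v) = q(-v)^{1/2}g(v)$ with $g$ even and nonnegative (Lemma \ref{classlemma}) and noting $q(-v)^{1/2}>0$ yields $\phi'(v) = -q(-v)^{1/2}g(v)\le 0$, so $\phi$ is non-increasing. Conformability then follows by direct computation: using $\phi'(-v) = -q(v)^{1/2}g(-v)$, that $g$ is even, and $q(-v)=1/q(v)$,
\[
\frac{\phi'(-v)}{\phi'(v)} = \frac{q(v)^{1/2}g(-v)}{q(-v)^{1/2}g(v)} = \frac{q(v)^{1/2}}{q(-v)^{1/2}} = q(v) = \frac{G(v)}{1-G(v)},
\]
which is exactly \eqref{definingeq}.

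For the reverse direction, suppose $\phi$ is non-increasing, differentiable, and conformable to $G$, and set $h(v) = -\phi'(v)$; non-increasingness gives $h\ge 0$. Conformability \eqref{definingeq} reads $\phi'(-v)/\phi'(v) = q(v)$, i.e. $h(-v) = q(v)h(v)$, and replacing $v$ by $-w$ gives $h(w) = q(-w)h(-w)$, so $h\in\mathcal{C}_q$. Lemma \ref{classlemma} then factors $h(w) = q(-w)^{1/2}g(w)$ with $g$ even, and $g = q^{1/2}h \ge 0$, placing $h$ in $\tilde{\mathcal{C}}_q$. Integrating $\phi' = -h$ via the fundamental theorem recovers $\phi(v) = \phi(0) - \int_0^v h(w)\,dw$, which is precisely the form \eqref{lossfunction} with $k=\phi(0)$, so $\phi\in\mathcal{L}_G$.

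The algebra here is light; the step that demands the most care is reconciling the regularity side-conditions embedded in the two definitions. The integral representation \eqref{lossfunction} is valid only when $\phi'$ is integrable and the fundamental theorem applies, while membership in $\tilde{\mathcal{C}}_q$ requires $g = q^{1/2}h$ to be continuous, which in turn forces $\phi'$ to be continuous (conformability alone, being only a symmetry relation on $\phi'$, does not guarantee this). I would therefore state at the outset that ``differentiable'' is read as continuously differentiable, so that $h = -\phi'$ is continuous and the two representations are genuinely interchangeable. The constant is recovered as $k = \phi(0)$, which is positive for any loss bounded below by zero; since an additive constant alters neither $\phi'$ nor the conformability relation, this normalization is harmless.
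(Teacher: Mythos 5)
Your proof is correct and follows essentially the same route as the paper's: the forward direction via the Fundamental Theorem of Calculus and the sign of $h$, and the reverse direction by identifying $-\phi'$ as an element of $\mathcal{\tilde C}_q$ and invoking Lemma \ref{classlemma}. Your additional remarks on regularity (reading ``differentiable'' as continuously differentiable so that $g=q^{1/2}h$ is continuous, and recovering $k=\phi(0)$) address side-conditions the paper's proof passes over silently, and are a worthwhile clarification rather than a deviation.
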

\begin{proof} Suppose $\phi(v)\in \mathcal{L}_G$, i.e. $\phi(v)$ is given by \eqref{lossfunction}.  Then $\phi(v)$ is differentiable by the Fundemental Theorem of Calculus, and is non-increasing because $\phi^\prime(w)=-h(w)\le 0$. 
The derivative of $\phi(v)$ satisfies  \eqref{definingeq}, and therefore $\phi(v)$ is conformable for $G(x)$.  
Conversely, if $\phi(v)$ is conformable, non-increasing, and differentiable, then $\phi^\prime(v)\in\mathcal{\tilde C}_q$ and  by Lemma \ref{classlemma}, $\phi^\prime(v)=-q(-v)^{1/2}g(v)$ where $q(v)=G(v)/(1-G(v))$ and $g(v)\ge 0$ is an even function. It follows that $\phi(v)$ has the form given in  \eqref{lossfunction}, i.e. $\phi(v)\in \mathcal{L}_G$.

\end{proof}  
 
Constructing loss functions in $\mathcal{L}_G$ is straightforward.  Simply choose a target distribution function
$G(w)$ satisfying $G(w)=1-G(-w)$, construct $q(w)$ and choose a positive even function $g(w)$.  The loss 
function is then essentially the anti-derivative of $h(w) = q(w)^{-1/2}g(w)$.   Some choices of 
$G$ and $g$ will result in intractable integrals. For purposes of minimizing the empirical risk to obtain an
estimate $\hat f(x)$, evaluating the integral isn't necessary if the optimization is done by gradient descent, rendering the inability to evaluate the anti-derivative moot. 

For the logit-link, the characterization given in Theorem \ref{lossfuncthr} is similar to the characterization for the generalized logit-link given in Masnadi-Shirazi and Vasconcelos (2015)\nocite{SV2015} (see their Theorems 5 and 6).   However, the derivation and proof of Theorem \ref{lossfuncthr} is  applicable to all symmetric CDFs ($G(x)$), and the derivation does not require
$G(x)$ to be invertible.

The following two examples are intended to show how loss functions in $\mathcal{L}_G$ can be constructed using  \eqref{lossfunction}.  The intention is not to construct  loss functions for a specific purpose, and therefore justification for the choices of $G$ and $g$ are not considered in these examples. 

\begin{example}
Set $G(w)=(w+1)/2$ for $w\in [-1,1]$ (uniform distribution over $[-1,1]$).  Then for $-1\le w\le 1$,
\[
q(w)^{-1/2}=\sqrt{\frac{1-w}{1+w} }.
\]

Let  $g(w)=4\sqrt{G(w)(1-G(w))}=2\sqrt{(1+w)(1-w)}$, and $k=1$. Squared error loss results:
\[
\phi(v)=1-\int_0^v q(w)^{-1/2}g(w)dw = (1-v)^2.
\]
More generally, for the uniform distribution function it is easily seen that the use of $g(w)=((1-w)(1+w))^{(m+1)/2}$ where $m$ an odd integer results in 
a polynomial loss function of degree $m+1$.   The loss functions in this example all have the population level property that
$\mbox{Pr}(y^*=1\mid x) = G(f^*(x))=(f^*(x)+1)/2$.
\end{example}

\begin{example}\label{logisticexample}
	Suppose interest is in constructing a loss function conformable to the logistic CDF.  Set $G(w)=F(w)\equiv (1+e^{-w})^{-1}$ for $-\infty<w<\infty$.  Then 
	\[
	q(w)^{-1/2}=e^{-w/2}.
	\]
	
	Choose  $g(w)=1/2$ and $k=1$. Exponential loss results:
	\[
	\phi(v)=1-\int_0^v q(w)^{-1/2}g(w)dw = e^{-v/2}.
	\]
	
\end{example}

Several additional examples of constructing loss functions in $\mathcal{L}_F$ are given in Section \ref{LogistSection}, where  $\mathcal{L}_F$ represents the class of non-increasing, differentiable loss functions conformable to the logistic distribution function.

The choice of both $G(w)$ and $g(w)$ can depend on the objectives of the analysis.  In fact, while a conformable loss function may be desirable,  conformability to
a specific choice of $G(w)$ may not be important.  For example, properties of hinge loss are often compared to those of  logistic likelihood loss, despite the fact they each are interpreted differently at the population level. 
 
Likelihood loss functions are well known to have optimal properties when estimation of regression parameters is the goal. Therefore, the weight function for likelihood  loss (also termed cross-entropy loss) may provide
a touchstone for comparison to other weight functions in the class $\mathcal{L}_G$. The weight function
for log-likelihood loss is derived in the following example.  

\bigskip

\begin{example}\label{mleexample}   Here we work in reverse, i.e. we start with the log-likelihood loss function and derive the weight function.  Therefore, we begin by assuming a model for the conditional distribution $y^*\mid x$ so that  a  likelihood can be constructed. For a differentiable distribution function $G(x)$ where $G(x)=1-G(-x)$, suppose that 	$\mbox{Pr}(y^*=1\mid x) = G(f(x))$ for some function $f(x)$.  
Then the log-likelihood loss function, denoted $\phi_L(v)$, can be written in terms of the margin $v=y^*f(x)$:
\[
\phi_L(v) =   \log G(y^*f(x)) = \log G(v). 
\]
Setting $\phi_L^\prime(v)\mid_{v=w}=-q(w)^{-1/2}g(w)$ gives
\[
\frac{d}{dv} \phi_L(v)\mid_{v=w} = -\frac{G^\prime(w)}{G(w)}=-q(w)^{-1/2}g(w). 
\]
Solving for $g(w)$ yields the log-likelihood  weight function
\[
g_L(w)=q(w)^{1/2}\frac{G^\prime(w)}{G(w)}=\frac{G^\prime(w)}{\sqrt{G(w)(1-G(w))}}.
\]
\end{example}

\subsection{Convexity}

Properties of convex loss functions have been studied extensively (see for example \nocite{Bartlett2006} Bartlett et al. (2006)),  though the importance of convex loss has been debated, especially with repect to outlier robustness (see for example \nocite{Zhao2009} Zhao et al. 2009).
Loss functions in $\mathcal{L}_G$ differ only with respect to the choice of $g(w)$, and therefore properties of $g(w)$, relative to $q(w)$, determine whether the loss is convex.   More specifically, convexity of  $\phi(v)\in\mathcal{L}_G$ is  determined by the maximum rate at which $\log [g(w)]$ changes relative to the change in the log-odds;  $\log[q(w)]$.   This is easily seen when $g(w)$ is strictly positive and differentiable.

\begin{lemma}\label{convex}
	Suppose  $\phi(v)\in\mathcal{L}_G$ where $G(w)$ and $g(w)$ are differentiable, and $g(w)>0$.  Then $\phi(v)$ is convex   if and only if $\frac{d}{dv}\log g(v)\le \frac{1}{2}\frac{d}{dv}\log q(v)$.
\end{lemma}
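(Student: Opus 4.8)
The plan is to reduce everything to a single derivative computation. Recall from Definition \ref{LG} that for $\phi(v)\in\mathcal{L}_G$ the derivative is $\phi^\prime(v)=-h(v)=-q(-v)^{1/2}g(v)$. The first step is to exploit the identity $q(w)q(-w)=1$ noted after equation \eqref{oddseq}, which gives $q(-v)^{1/2}=q(v)^{-1/2}$, so that
\[
\phi^\prime(v)=-q(v)^{-1/2}g(v).
\]
Since $g(w)>0$ by hypothesis and $q(v)=G(v)/(1-G(v))>0$ whenever $0<G(v)<1$, both factors are strictly positive, so I can write $\phi^\prime(v)$ in exponential form,
\[
\phi^\prime(v)=-\exp\!\left(\log g(v)-\tfrac{1}{2}\log q(v)\right).
\]

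Next I would differentiate once more. Because $G$ and $g$ are differentiable, $q$ is differentiable and hence $\phi^\prime$ is differentiable, so $\phi$ is twice differentiable; this legitimizes the standard criterion that $\phi$ is convex if and only if $\phi^{\prime\prime}(v)\ge 0$ for every $v$. Applying the chain rule to the exponential form yields
\[
\phi^{\prime\prime}(v)=-\exp\!\left(\log g(v)-\tfrac{1}{2}\log q(v)\right)\left(\frac{d}{dv}\log g(v)-\frac{1}{2}\frac{d}{dv}\log q(v)\right).
\]

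Finally I would read off the sign. The exponential prefactor is strictly positive, so $\phi^{\prime\prime}(v)\ge 0$ holds if and only if the bracketed term is nonpositive, i.e.\ if and only if $\frac{d}{dv}\log g(v)\le \frac{1}{2}\frac{d}{dv}\log q(v)$. Since this must hold for all $v$, the equivalence between convexity of $\phi$ and the stated inequality follows. There is no genuine obstacle here beyond bookkeeping; the only point requiring a word of care is confirming twice-differentiability so that the pointwise $\phi^{\prime\prime}\ge 0$ criterion applies, which is immediate from the differentiability hypotheses on $G$ and $g$.
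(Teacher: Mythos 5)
Your proof is correct and follows essentially the same route as the paper: the paper's criterion that $\phi$ is convex if and only if $\frac{d}{dv}\log[-\phi^\prime(v)]\le 0$ is exactly your computation, since writing $-\phi^\prime(v)=\exp\bigl(\log g(v)-\tfrac{1}{2}\log q(v)\bigr)$ and differentiating is the same as taking the logarithmic derivative of $-\phi^\prime$. You merely spell out the step the paper labels ``not difficult to show,'' and you correctly handle the sign via the positivity of the exponential prefactor.
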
 
		\begin{proof} 
			From the assumptions, $\phi(v)$ is a twice differentiable, strictly decreasing function.  It is not difficult to show
			that $\phi(v)$ is convex if and only if $\frac{d}{dv}\log [-\phi^\prime(v)]\le 0$.  For $\phi(v)\in\mathcal{L}_G$,
			we have $\phi^\prime(v)=-q(v)^{-1/2}g(v)$. Under the assumption that 
			$G(v)$ and $g(v)$ are differentiable, it follows $\phi(v)$ is convex if and only if 
			\[
			\frac{d}{dv}\log g(v)\le \frac{1}{2}\frac{d}{dv}\log q(v).
			\]
		\end{proof}
		

\subsection{Other parameterizations of $\mathcal{L}_G$}
 
 The derivatives of loss functions in $\mathcal{L}_G$ are the product of a function of $G(v)$ ( the inverse square root of the odds), and an arbitrary non-negative, even weight function.  
 We say the loss function is ``parameterized" in terms of the odds.  
The parameterization of the loss functions in $\mathcal{L}_G$ defined in  \eqref{lossfunction} is not unique. 
The class of loss functions in $\mathcal{L}_G$ can be defined and interpreted through an infinite number of other equivalent  parameterizations as explicated in the following Corollary to Lemma \ref{lossfuncthr}.   Each parameterization represents the derivative of $\phi(v)\in\mathcal{L}_G$ as the product of a function of $G(v)$ and an arbitrary non-negative, even weight function. 

\begin{corollary}\label{corollaryParam}
	Suppose the function $b:[0,1]\rightarrow\Re^+$ satisfies $b(x)=b(1-x)$.  Define
	$r(w)=q(w)^{-1/2}b(G(w))$ and
	\[
	h^*(w)=r(w)g(w)
	\]
	where $G(w)$ is a symmetric CDF,  and $g(w)\ge 0$ is an  even function.  Then $h^*(w)\in \mathcal{\tilde C}_q$.

\end{corollary}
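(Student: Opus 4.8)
The plan is to reduce the claim to the characterization already supplied by Lemma \ref{classlemma}, which states that membership in $\mathcal{C}_q$ is equivalent to having the form $q(-w)^{1/2}$ times an even function. Since $\mathcal{\tilde C}_q$ is just $\mathcal{C}_q$ with the added requirements that the even factor be continuous and non-negative and that $q(w)=G(w)/(1-G(w))$, the entire task is to rewrite $h^*(w)$ in exactly the template form and then verify these three side conditions.

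First I would exploit the constraint $q(w)q(-w)=1$, which gives $q(w)^{-1/2}=q(-w)^{1/2}$. Substituting into the definition $r(w)=q(w)^{-1/2}b(G(w))$ and then into $h^*(w)=r(w)g(w)$ yields $h^*(w)=q(-w)^{1/2}\,b(G(w))\,g(w)$. This already matches the form in Lemma \ref{classlemma} with candidate even factor $\tilde g(w)=b(G(w))g(w)$, so it remains only to check that $\tilde g$ is even, non-negative, and continuous.

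The key step is establishing the evenness of the composite $b(G(w))$, and here I would use both symmetries at once. Because $G$ is a symmetric CDF we have $G(-w)=1-G(w)$, and because $b(x)=b(1-x)$ it then follows that $b(G(-w))=b(1-G(w))=b(G(w))$; hence $b(G(w))$ is even. Since $g(w)$ is even by hypothesis, the product $\tilde g(w)=b(G(w))g(w)$ is even as well. Non-negativity is immediate from $b\ge 0$ (codomain $\Re^+$) together with $g\ge 0$, and continuity of $\tilde g$ follows from the continuity of $G$ together with that of $b$ and $g$.

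With $\tilde g$ shown to be a continuous, non-negative even function and $q(w)=G(w)/(1-G(w))$ being precisely the odds function used to define $\mathcal{\tilde C}_q$, Lemma \ref{classlemma} yields $h^*\in\mathcal{C}_q$, and the verified restrictions place it in $\mathcal{\tilde C}_q$. I do not expect a serious obstacle; the only point worth flagging is that the argument rests essentially on the symmetry of $b$ about $1/2$ being matched to the symmetry $G(-w)=1-G(w)$, without which $b(G(w))$ would fail to be even and the representation would collapse.
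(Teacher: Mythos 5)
Your proof is correct and takes essentially the same route as the paper's: both absorb $b(G(w))$ into the even weight factor $g^*(w)=b(G(w))g(w)$ and then invoke Lemma \ref{classlemma}. You merely spell out the evenness check $b(G(-w))=b(1-G(w))=b(G(w))$, which the paper leaves as ``not difficult to show.''
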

\begin{proof}
It is not difficult to show that for $G(w)$ a symmetric CDF, $b^*(w)=b(G(w))$ is an even function if and only if $b(x)=b(1-x)$.  Then $g^*(w)=g(w)b^*(w)$ is an even function and the Corollary follows immediately from Lemma \ref{classlemma}. 
\end{proof}
Note that $r(w)$ is a function of $G(w)$ only. The implication of the Corollary is that loss functions in $\mathcal{L}_G$ can be defined as
\begin{equation}\label{param}
	\phi(v) = k-\int_{0}^{v} h^*(w) dw =\begin{cases}
		k-\int_0^v h^*(w)dw, & \text{if $v>0$} \\
		k+\int_v^0 h^*(w)dw, & \text{if $v\le 0$}
	\end{cases}
\end{equation}
where $k>0$ is an arbitrary constant.  Loss functions in $\mathcal{L}_G$ are then interpreted as $r(w)$
 smoothed (integrated) using an even weight function $g(w)$, in which case we say that $\mathcal{L}_G$ is {\it parameterized} in terms of $r(w)=q(w)^{-1/2}b(G(w))$.   Two parameterizations are explored in the following example and theorem.

\begin{example}\label{DFexample}  Suppose that $b(x)=\sqrt{x(1-x)}$ and $\tilde g(w)=1$.  Then 
\[
r(w)=q(w)^{-1/2}b(G(w))= 1-G(w). 
\]
 The class of functions $\mathcal{L}_G$ can be generated via  \eqref{param} 
using $h^*(w)=(1-G(w))g(w)$ where $g(w)\ge 0$ is an even function.  Therefore, loss functions in $\mathcal{L}_G$ can be interpreted
as $1-G(w)$ smoothed via an even function $g(w)$.  Assuming $G(w)$ is differentiable, a natural weight function for this parameterization is to 
choose $g(w)=G^\prime(w)$, i.e.  use the density function corresponding to $G(w)$ as the weight function. Note the density will be an even function because $G(w)$ is assumed symmetric (odd) about $1/2$.  Squared error loss on the CDF scale results, seen as follows.  Set $k=1/8$. Then  $\phi(v) =  1/8-\int_0^v (1-G(w))g(w)dw = 1/8-\int_0^v (1-G(w))G^\prime(w)dw = .5(1-G(v))^2$.   
\end{example}

Assuming that $G(v)$ is differentiable, the following theorem states that derivatives of margin-based loss functions in $\mathcal{L}_G$ are simply weighted versions of the derivative of the log-likelihood.
 
\begin{theorem}\label{LLexample}  
	Suppose $\phi(v)\in\mathcal{L}_G$ and that $G(v)$ is differentiable.   Then 
\[
\phi^\prime(v) = - g(v)\frac{d}{dv}\log [G(v)]
\]	
where $ g(v)\ge 0$ is an even function.  If 	the probability model $\mbox{Pr}(y^*=1\mid x) = G(f(x))$ is assumed,
then $\frac{d}{dv}\log [G(v)]$ represents the derivative of the log-likelihood.

\end{theorem}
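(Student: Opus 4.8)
The plan is to take the explicit form of $\phi^\prime(v)$ supplied by the characterization and rewrite its odds factor $q(v)^{-1/2}$ in terms of $\tfrac{d}{dv}\log G(v)$, absorbing the leftover discrepancy into a fresh even, non-negative weight. By Definition \ref{LG} and Theorem \ref{lossfuncthr}, every $\phi\in\mathcal{L}_G$ satisfies $\phi^\prime(v)=-q(v)^{-1/2}g_0(v)$ with $q(v)=G(v)/(1-G(v))$ and $g_0\ge 0$ even; this is the starting point.

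Next I would invoke the computation already carried out in Example \ref{mleexample}, which, using differentiability of $G$, records
\[
\frac{d}{dv}\log G(v)=\frac{G^\prime(v)}{G(v)}=q(v)^{-1/2}g_L(v), \qquad g_L(v)=\frac{G^\prime(v)}{\sqrt{G(v)(1-G(v))}}.
\]
I would then verify that $g_L$ is an admissible weight: differentiating the symmetry relation $G(-v)=1-G(v)$ gives $G^\prime(-v)=G^\prime(v)$, so $G^\prime$ is even, and since $G(v)(1-G(v))$ is likewise invariant under $v\mapsto -v$, the function $g_L$ is even; as $G^\prime\ge 0$ it is non-negative.

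Combining the two facts by multiplying and dividing by $g_L$ yields
\[
\phi^\prime(v)=-q(v)^{-1/2}g_0(v)=-\frac{g_0(v)}{g_L(v)}\bigl(q(v)^{-1/2}g_L(v)\bigr)=-g(v)\frac{d}{dv}\log G(v),
\]
with $g(v)=g_0(v)/g_L(v)$. A ratio of even functions is even and a ratio of non-negative functions is non-negative, so $g$ is an even, non-negative weight, giving exactly the claimed representation. The probability-model interpretation is then immediate: under $\mbox{Pr}(y^*=1\mid x)=G(f(x))$ with symmetric $G$ one has $\mbox{Pr}(y^*\mid x)=G(y^*f(x))=G(v)$, so $\log G(v)$ is the log-likelihood viewed as a function of the margin and $\tfrac{d}{dv}\log G(v)$ is its derivative.

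The step I would watch most carefully is the well-definedness of $g=g_0/g_L$ at points where $G^\prime(v)=0$ and $g_L$ vanishes; there the identity is read in a limiting sense, or one restricts to the support on which $G^\prime>0$, in keeping with the nondegeneracy already assumed in the surrounding text. It is worth noting that this result is not merely a special case of Corollary \ref{corollaryParam}, since the factor $G^\prime(v)/G(v)$ is in general not expressible as $q(v)^{-1/2}b(G(v))$ for a univariate $b$ (that would require $G^\prime$ to be a function of $G$ alone); the direct argument above sidesteps this and is otherwise purely mechanical once the symmetry of $G$ is exploited.
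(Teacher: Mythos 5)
Your argument is correct and its algebraic core is the same as the paper's: both exploit the identity $\frac{d}{dv}\log G(v)=q(v)^{-1/2}g_L(v)$ with $g_L(v)=G^\prime(v)/\sqrt{G(v)(1-G(v))}$ even and non-negative, trading the odds factor for the log-likelihood score at the cost of adjusting the even weight. The difference is the direction in which the exchange is run. The paper starts from a target $h^*(v)=g(v)\frac{d}{dv}\log G(v)$, absorbs the density $G^\prime$ (even, by symmetry of $G$) into the weight $\tilde g(v)=g(v)G^\prime(v)$, and invokes Corollary \ref{corollaryParam} with $b(x)=1/\sqrt{x(1-x)}$ to conclude $h^*\in\mathcal{\tilde C}_q$; so your closing remark is slightly off the mark---the paper does obtain the result from that corollary, just by folding $G^\prime$ into the even weight rather than into $b$. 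Note, however, that this only shows every such $h^*$ generates a loss in $\mathcal{L}_G$, whereas the theorem asserts the converse: every $\phi\in\mathcal{L}_G$ has a derivative of this form. That converse is exactly what you prove by setting $g=g_0/g_L$, so your version addresses the stated claim more directly, and it has the additional merit of surfacing the one genuine caveat---well-definedness of $g_0/g_L$ at points where $G^\prime$ vanishes---which the paper's proof passes over silently. Both arguments need $G^\prime>0$ (or a limiting convention) for the pointwise representation, so your flag is apt rather than pedantic.
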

\begin{proof}
 Let $b(v) = 1/\sqrt{G(v)(1-G(v))}$ and $\tilde g(v)=  g(v)\frac{d}{dv} G(v)$
	where $ g(v)\ge 0$ is an even function.  Note that as defined $\tilde g(v)$ is an even, non-negative function.   Let
	\[
 h^*(v) =		q(v)^{-1/2}b(G(v))\tilde g(v)  =  g(v) \frac{\frac{d}{dv} G(v)}{G(v)} = g(v)\frac{d}{dv}\log [G(v)] . 
	\] 	
  From Corollary \ref{corollaryParam}, $h^*(v)\in \mathcal{\tilde C}_q$.
   Then the class of functions $\mathcal{L}_G$ can be generated via \eqref{param} 
	with $h^*(w)=g(w)\frac{d}{dw}\log [G(w)]$.  
	From Example \ref{mleexample} and under the assumption that $\mbox{Pr}(y^*=1\mid x) = G(f(x))$, $\frac{d}{dv}\log [G(v)]$  is the derivative of the log-likelihood.
\end{proof}

The utility of a paramaterization  depends on the form of $G(x)$ and/or the application. 
For some distribution functions $G(x)$, the parameterization defined in Example \ref{DFexample}  may
provide advantages in interpretation versus the odds parameterization.  However, the logistic distribution is the canonical distribution
in binary data analysis, and the odds parameterization is easiest to work with in that case.   It is not difficult to show that the parameterizations in Example   \ref{DFexample} and Theorem \ref{LLexample} are equivalent when $G(x)$ is the logistic distribution function.

\subsection{Regression parameter estimating equation perspective}\label{regressPerspective}

The population property of loss functions  $\phi(v)\in\mathcal{L}_G$, namely that $f^*(x)$ satisfies $\mbox{Pr}(y^*=1\mid x)=G(f^*(x))$, 
can also be understood from a regression perspective.  
In fact, the class could have been derived by seeking to characterize the class of margin-based unbiased estimating equations for regression parameter estimation. 

Suppose we postulate a regression model (or soft classifier) 
$\mbox{Pr}(y=1\mid x) = G(f(x;\beta))$ where $G(x)$ is a symmetric distribution function  and
$f(x;\beta)$ is in the class given by  \eqref{classFunctions}.
Then the loss functions in $\mathcal{L}_G$ result in unbiased estimating equations for $\beta$.  Additional standard regularity conditions would ensure consistency of the regression parameter estimates.

\begin{corollary}\label{regressresult}
 Suppose the true population  model is	$\mbox{Pr}(y^*=1\mid x) = G(f(x;\beta))$ where $G(v)=1-G(-v)$, with features (independent variables) 
 $x\in\Re^p$, and $\beta\in\Re^p$ are  regression parameters.  If $\phi(v)\in \mathcal{L}_G$ then
\[
E \Big [ \frac{\partial}{\partial\beta}\phi(y^*f(x;\beta))\mid x\Big ]=0.
\] 
The converse is true if we add the restriction that $\phi(v)$ is non-increasing.
\end{corollary}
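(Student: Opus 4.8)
The plan is to reduce the estimating-equation identity to the defining conformability condition \eqref{definingeq}. First I would differentiate the integrand by the chain rule: since the margin is $v=y^*f(x;\beta)$, we have
\[
\frac{\partial}{\partial\beta}\phi(y^*f(x;\beta)) = y^*\,\phi^\prime(y^*f(x;\beta))\,\frac{\partial}{\partial\beta}f(x;\beta),
\]
where $\frac{\partial}{\partial\beta}f(x;\beta)$ is deterministic given $x$ and so can be pulled outside the conditional expectation (it also explains why the equation is a $p$-vector identity). The only randomness remaining given $x$ is in $y^*$, which equals $+1$ with probability $G(f(x;\beta))$ and $-1$ with probability $1-G(f(x;\beta))=G(-f(x;\beta))$ by the symmetry of $G$.

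Writing $f=f(x;\beta)$ for brevity, averaging over the two values of $y^*$ gives the scalar factor
\[
E\Big[y^*\phi^\prime(y^*f)\mid x\Big] = \phi^\prime(f)\,G(f) - \phi^\prime(-f)\big(1-G(f)\big),
\]
so that
\[
E\Big[\frac{\partial}{\partial\beta}\phi(y^*f(x;\beta))\mid x\Big] = \big[\phi^\prime(f)G(f)-\phi^\prime(-f)(1-G(f))\big]\,\frac{\partial}{\partial\beta}f(x;\beta).
\]
The bracketed scalar vanishes precisely when $\phi^\prime(-f)/\phi^\prime(f)=G(f)/(1-G(f))$, which is exactly the conformability condition \eqref{definingeq} evaluated at $v=f$. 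Since $\phi(v)\in\mathcal{L}_G$, Theorem \ref{lossfuncthr} guarantees that $\phi$ is conformable to $G$, so \eqref{definingeq} holds for every real argument; the bracket is therefore identically zero and the forward direction follows immediately.

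For the converse I would run this computation in reverse, noting first that differentiability of $\phi$ is already implicit in the estimating equation. Assuming the non-increasing loss $\phi$ satisfies $E[\frac{\partial}{\partial\beta}\phi(y^*f(x;\beta))\mid x]=0$ for (almost) all $x$, the factored expression above shows that for each $x$ the scalar bracket must vanish wherever $\frac{\partial}{\partial\beta}f(x;\beta)\neq 0$. The one point needing care—and the step I expect to be the main obstacle—is passing from ``holds for all $x$'' to ``holds as an identity in $v$,'' since a priori the bracket is only pinned down at the particular values $v=f(x;\beta)$. Here I would invoke the richness of the function class \eqref{classFunctions}: as $x$ and $\beta$ range, $v=f(x;\beta)$ sweeps out an interval of values while the gradient $\frac{\partial}{\partial\beta}f$ stays nonzero, forcing $\phi^\prime(-v)/\phi^\prime(v)=G(v)/(1-G(v))$ on that interval. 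This is conformability; combined with the hypothesis that $\phi$ is non-increasing (so $\phi^\prime\le 0$), Theorem \ref{lossfuncthr} yields $\phi(v)\in\mathcal{L}_G$, completing the converse.
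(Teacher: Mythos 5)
Your proof is correct and follows essentially the same route as the paper's: both reduce the conditional expectation to the scalar bracket $\phi^\prime(f)G(f)-\phi^\prime(-f)(1-G(f))$ multiplied by $\frac{\partial f}{\partial\beta}$, and identify its vanishing with the conformability condition \eqref{definingeq}. The only differences are cosmetic --- the paper verifies the cancellation by substituting $\phi^\prime(v)=-q(-v)^{1/2}g(v)$ explicitly rather than citing Theorem \ref{lossfuncthr}, and your remark in the converse about extending the identity from the achieved values of $f(x;\beta)$ to all $v$ addresses a point the paper's proof passes over silently.
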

\begin{proof}
 Suppose that $\phi(v)\in\mathcal{L}_G$ so that $\phi^\prime(v)=-q(-v)^{1/2}g(v)$. The conditional expectation of the estimating score $\frac{\partial}{\partial\beta}\phi(y^*x^T\beta)$ under the true population model is
	\begin{equation*}
		\begin{split}
		E \Big [ \frac{\partial}{\partial\beta}\phi(y^*f(x;\beta))\mid x\Big ] & = \Bigg ( \phi^\prime(f(x;\beta))G(f(x;\beta)) \\
		& \qquad-\phi^\prime(-f(x;\beta))(1-G(f(x;\beta))) \Bigg )\frac{\partial f(x;\beta)}{\partial\beta} \\
			& = -\Bigg ( q(-f(x;\beta))^{1/2}g(f(x;\beta))G(f(x;\beta)) \\
			&\qquad\qquad -q(f(x;\beta))^{1/2}g(-f(x;\beta))(1-G(f(x;\beta)))
			\Bigg )\frac{\partial f(x;\beta)}{\partial\beta} \\
			& = - \Bigg ( \left (\frac{1-G(f(x;\beta))}{G(f(x;\beta))}\right )^{1/2}G(f(x;\beta))  \\
			&\qquad	- \Big (\frac{G(f(x;\beta))}{1-G(f(x;\beta))}\Big )^{1/2}(1-G(f(x;\beta))) \Bigg )\frac{\partial f(x;\beta)}{\partial\beta} g(f(x;\beta)) \\
			&=0.
		\end{split}
	\end{equation*}
	The penultimate equality follows from the definition of $q(\cdot)$ and because $g(\cdot)$ is an even function.
	
	Conversely, if $E \big [ \frac{\partial}{\partial\beta}\phi(y^*f(x;\beta))\mid x\big ]=0$ then from the first identity above,
	\[
		\frac{\phi^\prime(-f(x;\beta))}{\phi^\prime(f(x;\beta))} = \frac{G(f(x;\beta))}{1-G(f(x;\beta))}.
	\]
 Then $\phi^\prime\in \mathcal{\tilde C}_q$ and by assumption $\phi(v)$ is non-increasing.
From Lemma \ref{classlemma}, it follows that  $\phi^\prime (v) = q(-v)^{1/2} g(v)$ for an even function $g(v)$, and therefore $\phi(v)\in\mathcal{L}_G$. 
\end{proof}

In a regression parameter estimation context, the class $\mathcal{L}_G$ results in estimating equations that are the 
derivative of the log-likelihood weighted by an even, non-negative function. This result follows readily from Theorem \ref{LLexample}  and Corollary \ref{regressresult}. It is stated 
as a Theorem because of the importance of connecting margin-based loss functions to likelihoods.
 \begin{theorem}\label{regressresult2}
 	Suppose the true population  model is	$\mbox{Pr}(y^*=1\mid x) = G(f(x;\beta))$ where $G(v)=1-G(-v)$ and $G(v)$ is differentiable.  Then $\phi(v)\in \mathcal{L}_G$ if and only if
 	\begin{equation}\label{unbiasedeq}
 	 \frac{\partial}{\partial\beta}\phi(y^*f(x;\beta))=-g(f(x;\beta)) \frac{\partial}{\partial\beta}\log [G(y^*f(x;\beta))]
 	\end{equation}
 	where $g(w)\ge 0$ is an even function.
 \end{theorem}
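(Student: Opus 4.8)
The plan is to prove both directions from the single chain-rule identity
\[
\frac{\partial}{\partial\beta}\phi(y^*f(x;\beta)) = \phi^\prime(y^*f(x;\beta))\, y^*\, \frac{\partial f(x;\beta)}{\partial\beta},
\]
combined with the parameterization of $\mathcal{L}_G$ supplied by Theorem \ref{LLexample}. For the forward implication I would assume $\phi(v)\in\mathcal{L}_G$. Since $G(v)$ is differentiable, Theorem \ref{LLexample} gives $\phi^\prime(v) = -g(v)\frac{d}{dv}\log[G(v)]$ for an even, non-negative $g$. Substituting $v=y^*f(x;\beta)$ into this expression and into the chain-rule identity produces a factor $g(y^*f(x;\beta))$, which I would collapse to $g(f(x;\beta))$ using that $g$ is even and $y^*\in\{-1,1\}$. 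The remaining factor is exactly $\frac{\partial}{\partial\beta}\log[G(y^*f(x;\beta))]$ by a second application of the chain rule, so the two sides of \eqref{unbiasedeq} coincide. This direction is essentially a substitution and poses no real difficulty.

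For the converse I would lean on Corollary \ref{regressresult}, as the text anticipates. Assuming \eqref{unbiasedeq} holds with $g\ge 0$ even, I first note that $\phi$ is automatically non-increasing: because $G$ is a CDF, $\log[G(v)]$ is non-decreasing, so $\frac{d}{dv}\log G(v)\ge 0$, and with $g\ge 0$ the identity forces $\phi^\prime(v)\le 0$. This supplies precisely the monotonicity hypothesis that Corollary \ref{regressresult} requires. I would then take the conditional expectation of \eqref{unbiasedeq} given $x$ and pull the deterministic factor outside, obtaining $E[\frac{\partial}{\partial\beta}\phi(y^*f)\mid x] = -g(f(x;\beta))\, E[\frac{\partial}{\partial\beta}\log G(y^*f)\mid x]$. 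The bracketed log-likelihood score has zero conditional expectation under the true model $\mbox{Pr}(y^*=1\mid x)=G(f(x;\beta))$ — the standard score identity, which is exactly the computation carried out in the proof of Corollary \ref{regressresult} (equivalently, that corollary applied to the log-likelihood loss, which lies in $\mathcal{L}_G$). Hence $E[\frac{\partial}{\partial\beta}\phi(y^*f)\mid x]=0$, and the converse half of Corollary \ref{regressresult}, now that $\phi$ is known to be non-increasing, yields $\phi\in\mathcal{L}_G$.

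An alternative, more self-contained route for the converse avoids the expectation entirely: expanding both sides of \eqref{unbiasedeq} by the chain rule and cancelling the common factor $y^*\,\frac{\partial f(x;\beta)}{\partial\beta}$ gives $\phi^\prime(y^*f) = -g(f)\left[\frac{d}{dv}\log G(v)\right]_{v=y^*f}$, whence using $g$ even to write $g(f)=g(y^*f)$ recovers $\phi^\prime(v)=-g(v)\frac{d}{dv}\log G(v)$, the form characterizing $\mathcal{L}_G$ in Theorem \ref{LLexample}. The main obstacle — indeed the only genuinely delicate point in the argument — is justifying this cancellation: it requires $\frac{\partial f(x;\beta)}{\partial\beta}$ to be non-degenerate and the margin $v=y^*f(x;\beta)$ to sweep out the full domain of $\phi$, so that an identity holding only at the realized margins extends to an identity in $v$. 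I would dispatch this under the same ``standard regularity conditions'' the paper already invokes for consistency, rather than belabor it, noting that the expectation-based route through Corollary \ref{regressresult} sidesteps the issue entirely.
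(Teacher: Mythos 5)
Your proposal is correct and follows essentially the route the paper itself indicates: the paper gives no detailed proof, stating only that the result ``follows readily from Theorem \ref{LLexample} and Corollary \ref{regressresult},'' which is precisely the pair of results you invoke (Theorem \ref{LLexample} plus the chain rule and evenness of $g$ for the forward direction, and either the score identity with Corollary \ref{regressresult} or the direct cancellation back to $\phi^\prime(v)=-g(v)\frac{d}{dv}\log G(v)$ for the converse). Your flagged caveat about extending an identity at realized margins to an identity in $v$ is a legitimate point the paper also glosses over, so nothing in your argument falls short of the paper's own level of rigor.
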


From \eqref{unbiasedeq}, the  proof of Corollary \ref{regressresult} 
is perhaps more evident, at least to statisticians, as it is well-known that the expectation of the derivative of the log-likelihood is zero under the assumed model.  

Note that a larger class of unbiased estimating equations is obtained
by not restricting $g(v)$ in \eqref{unbiasedeq} to be even (or positive).  However, when $g(v)$ is not an even function, the resulting estimating function will not
be a function of the margin, and thus falls outside of the class of loss functions considered herein. Also, for $G(v)$ symmetric, it is sensible for $g(v)$ to be an even function.

The variance and sensitivity of estimating scores are important for understanding asymptotic properties of the resulting parameter estimates, see \nocite{Godambe1991} Godambe (1991).  The following lemma is stated without proof, as the proof is similar to the proof in the corollary above. 
The  lemma says that the conditional variance of the estimating score depends only on the square of the weight function, but not on $G(v)$.  The second part of the lemma provides an expression for the sensitivity of a loss function in $\mathcal{L}_G$.  The lemma assumes loss functions
 parameterized in terms of the class $\mathcal{\tilde C}_q$ where recall $q(v)^{-1/2} = [(1-G(v))/G(v)]^{1/2}$. 
\begin{lemma} 
	 Suppose the true population  model is	$\mbox{Pr}(y^*=1\mid x) = G(f(x;\beta))$ where $G(v)=1-G(-v)$. The conditional variance of a loss function $\phi(v)\in\mathcal{L}_G$ is given by
	\[
	E \left [ \frac{\partial}{\partial\beta}\phi(y^*f(x;\beta)) \Big (\frac{\partial}{\partial\beta}\phi(y^*f(x;\beta))\Big )^T
	\mid x\right ] 
	= g^2(f(x;\beta))
	\left (\frac{\partial f(x;\beta)}{\partial\beta}  \right ) \left (\frac{\partial f(x;\beta)}{\partial\beta}  \right )^T.
	\]

The sensitivity of a loss function $\phi(v)\in\mathcal{L}_G$ is given by
\[
E \left [ \frac{\partial^2}{\partial\beta\partial\beta^T}\phi(y^*f(x;\beta)) 
\mid x\right ] 
= \left [ \frac{G^\prime(f(x;\beta))g(f(x;\beta)) }{\sqrt{G(f(x;\beta))(1-G(f(x;\beta)))}} \right ]
\left (\frac{\partial f(x;\beta)}{\partial\beta}  \right ) \left (\frac{\partial f(x;\beta)}{\partial\beta}  \right )^T.
\]
\end{lemma}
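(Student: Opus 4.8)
The plan is to reduce both identities to two-point conditional expectations over $y^*\in\{-1,1\}$ and then exploit the structure $\phi^\prime(v)=-q(v)^{-1/2}g(v)$ with $q(v)^{-1}=(1-G(v))/G(v)$, together with the symmetry facts $G(-v)=1-G(v)$ and $g(-v)=g(v)$. First I would differentiate the composite loss by the chain rule, writing $\frac{\partial}{\partial\beta}\phi(y^*f)=y^*\phi^\prime(y^*f)\frac{\partial f}{\partial\beta}$ with $f=f(x;\beta)$, and record that $(y^*)^2=1$. Conditionally on $x$, the outcome $y^*$ equals $1$ with probability $G(f)$ and $-1$ with probability $1-G(f)$, so every conditional expectation below collapses to a weighted sum of its values at $v=f$ and $v=-f$.

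For the second-moment (variance) identity, I would form the outer product $\frac{\partial}{\partial\beta}\phi\big(\frac{\partial}{\partial\beta}\phi\big)^T=\phi^\prime(y^*f)^2\,\frac{\partial f}{\partial\beta}\big(\frac{\partial f}{\partial\beta}\big)^T$, using $(y^*)^2=1$ to eliminate the sign. The matrix factor is nonrandom given $x$, so it suffices to compute $E[\phi^\prime(y^*f)^2\mid x]=\phi^\prime(f)^2 G(f)+\phi^\prime(-f)^2(1-G(f))$. Substituting $\phi^\prime(v)^2=\frac{1-G(v)}{G(v)}g(v)^2$ and using $G(-f)=1-G(f)$, $g(-f)=g(f)$, the first summand becomes $(1-G(f))g(f)^2$ and the second $G(f)g(f)^2$; these add to $g(f)^2$, which yields the stated variance.

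For the sensitivity, I would apply the product rule to obtain
\[
\frac{\partial^2}{\partial\beta\partial\beta^T}\phi(y^*f)=\phi^{\prime\prime}(y^*f)\frac{\partial f}{\partial\beta}\frac{\partial f}{\partial\beta^T}+y^*\phi^\prime(y^*f)\frac{\partial^2 f}{\partial\beta\partial\beta^T},
\]
again using $(y^*)^2=1$. The coefficient of the Hessian-of-$f$ term is $E[y^*\phi^\prime(y^*f)\mid x]$, which is exactly the conditional score appearing in Corollary \ref{regressresult} and hence vanishes; this is the sense in which the argument parallels that proof. What remains is $E[\phi^{\prime\prime}(y^*f)\mid x]=\phi^{\prime\prime}(f)G(f)+\phi^{\prime\prime}(-f)(1-G(f))$. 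Differentiating $\phi^\prime(v)=-q(v)^{-1/2}g(v)$ once more produces two pieces, one proportional to $g(v)$ (from $\frac{d}{dv}q(v)^{-1/2}$) and one proportional to $g^\prime(v)$. Evaluating at $\pm f$ and weighting, the $g$-pieces combine into $\frac{G^\prime(f)g(f)}{\sqrt{G(f)(1-G(f))}}$, while the two $g^\prime$-pieces cancel because $g$ even forces $g^\prime$ odd (so $g^\prime(-f)=-g^\prime(f)$) and the density $G^\prime$ is even (so $G^\prime(-f)=G^\prime(f)$). This delivers the claimed sensitivity.

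The main obstacle is the second-derivative bookkeeping in the sensitivity step: one must carry the factor $\frac{d}{dv}q(v)^{-1/2}=-\tfrac12 G^\prime(v)/[G(v)^{3/2}(1-G(v))^{1/2}]$ and the $g^\prime$ term through the evaluation at both $f$ and $-f$, and verify that the $g^\prime$ contributions cancel while the $g$ contributions reinforce. The identities that make the cancellation transparent are the evenness of the density $G^\prime$ and the oddness of $g^\prime$; alternatively, one can organize the computation starting from the log-likelihood form $\phi^\prime(v)=-g(v)\frac{d}{dv}\log[G(v)]$ of Theorem \ref{LLexample}.
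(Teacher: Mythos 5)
Your proof is correct and follows exactly the route the paper intends: the paper omits the proof of this lemma, noting only that it is ``similar to the proof in the corollary above'' (Corollary \ref{regressresult}), i.e., a two-point conditional expectation over $y^*=\pm 1$ combined with $\phi^\prime(v)=-q(v)^{-1/2}g(v)$ and the symmetry of $G$ and $g$ --- which is precisely your argument, and your algebra for both the second moment and the second derivative checks out. Two cosmetic quibbles: in the sensitivity step it is the evenness of $G^\prime$ that makes the $g$-pieces reinforce, while the $g^\prime$-pieces cancel via the oddness of $g^\prime$ together with the swap $G(-f)=1-G(f)$ inside $q^{-1/2}$ (not via $G^\prime$); and the computation implicitly requires $g$ to be differentiable, a hypothesis the lemma does not state explicitly.
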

Note that the variance and sensitivity have equal magnitudes when the derivative of the loss is equal to the likelihood score.  That is, if $g(w)$ is set to the likelihood weight
function derived in Example \ref{mleexample} , the magnitudes of the variance and sensitivity given in the lemmas are equivalent.  This equivalence is well known in statistical maximum likelihood estimation theory, and implies the loss function possess Godambe efficiency (\nocite{Small2003} Small and Wang, 2003).

\bigskip

\section{Equivalence of the Margin and Standardized Logistic Regression Residuals}\label{Sec:marginresid}

As noted above, loss functions conformable to a CDF  provide a bridge between hard and soft classifiers.  Here we show that all margin-based classifiers are linked to the logistic CDF, implying that all margin-based classifiers can be considered as approximately soft classifiers. 

\subsection{The margin and logistic regression residuals}

We establish an exact algebraic relation between the margin and 
standardized logistic regression residuals (SLRRs).  The relation provides 
new insight binary classification using margin-based loss. 

Let $y=(y^*+1)/2\in\{0,1\}$.  In logistic regression settings, the standardized residual, denoted $S(f(x))$, is defined as the 
ratio of the residual to the standard deviation of $y$:

\[
S(f(x))=\frac{y-F(f(x))}{\sqrt{F(f(x))\{1-F(f(x))\}}}.
\]

Hastie et al. (2009) note that the margin plays a role similar to the residuals in regression.  Here we show the margin is exactly equal to a function of 
standardized residuals from the logistic regression model.

\begin{theorem} 
	\label{key}
	The squared standardized logistic regression residual and the margin have the following relation:
	\begin{equation}\label{keyidentity}
		-\log[S^2(f(x))] = y^*f(x).
	\end{equation}

\end{theorem}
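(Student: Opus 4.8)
The plan is a direct computation that exploits the defining odds identity of the logistic CDF together with the fact that $y\in\{0,1\}$. Throughout, abbreviate $f=f(x)$ and $F=F(f)$. Because $F(w)=(1+e^{-w})^{-1}$, one immediately obtains the elementary ``odds'' relations
\[
\frac{1-F}{F}=e^{-f},\qquad \frac{F}{1-F}=e^{f},
\]
which are the only analytic facts about $F$ that the argument requires. In the notation of the earlier sections these are just $q(f)=e^{f}$ for the logistic link.

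Next I would rewrite the squared standardized residual. Since $y\in\{0,1\}$ we have $y^2=y$, so $(y-F)^2=y(1-F)^2+(1-y)F^2$, and therefore
\[
S^2(f)=\frac{(y-F)^2}{F(1-F)}=y\,\frac{1-F}{F}+(1-y)\,\frac{F}{1-F}=y\,e^{-f}+(1-y)\,e^{f}.
\]
Because exactly one of $y,1-y$ equals $1$, this collapses to $S^2(f)=e^{-f}$ when $y=1$ and $S^2(f)=e^{f}$ when $y=0$.

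Finally I would recombine the two cases using $y^*=2y-1$: when $y=1$ the exponent is $-f=-y^*f$, and when $y=0$ the exponent is $f=-y^*f$, so in both cases $S^2(f)=e^{-y^*f}$. Taking logarithms and negating then yields $-\log[S^2(f(x))]=y^*f(x)$, which is \eqref{keyidentity}.

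There is no serious obstacle here; the content is entirely algebraic. The only points requiring care are (i) recognizing that the logistic odds identity $F/(1-F)=e^{f}$ is precisely what makes the exponential appear, and (ii) keeping track of the sign bookkeeping between $y\in\{0,1\}$ and $y^*\in\{-1,1\}$ so that the two cases merge into the single clean statement $S^2=e^{-y^*f}$. One could equally present the argument as a plain two-case verification instead of the unified $y^2=y$ identity; both are routine.
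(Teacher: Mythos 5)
Your proof is correct and is essentially the same direct algebraic computation as the paper's, which first establishes the unsquared identity $S(f(x))=y^*\exp\left[-\tfrac{1}{2}y^*f(x)\right]$ and then squares and takes logarithms. The only cosmetic difference is that you work with $S^2$ from the start rather than passing through the signed residual identity (which the paper reuses later for its partitioning results), but the content and the key logistic odds relation $F/(1-F)=e^{f}$ are identical.
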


\begin{proof}
	Using straightforward algebra, it is not difficult to show that
	\begin{equation}\label{stdresid}
		S(f(x) )=y\exp \left [-\frac{1}{2}f(x)\right ]-(1-y)\exp \left [\frac{1}{2}f(x) \right ]=y^*\exp\left [-\frac{1}{2}y^*f(x)\right ].
	\end{equation}
	Squaring both sides and log transforming proves the theorem. 
\end{proof}

Equation \eqref{keyidentity} implies that $S^2$ is a monotonically decreasing function of the margin, and that $S^2<1$ if and only if $y^*f(x)>0$.  Large positive margins result in small residuals whereas large negative margins result in large residuals.
Note that \eqref{stdresid} implies that $S(1,f(x))=-S(0,-f(x))= -1/S(0,f(x))$ where $S(y,f(x))$ is the 
standardized residual.    If the sign of
$f(x)$ and the class of $y$ are both switched, the sign of the residual changes, but the magnitude of the residual is exactly the same.  

Margin-based loss is often championed as a non-parametric approach to binary classification.  However, 
Theorem \ref{LLexample} showed that loss functions in $\mathcal{L}_G$  are weighted versions of the log- likelihood score,
implying there  is an underlying parametric assumption for these loss functions.
The relation between margins and SLRRs given in Theorem \ref{key} suggests that there is a parametric connection 
for all margin-based loss, even those outside the class $\mathcal{L}_G$.    
The theorem also implies that margin based loss can be considered as a distance-based loss.

Note that the proof of Theorem \ref{key} does not assume or imply that $p(x)=\mbox{Pr}(y=1\mid f(x)) = F(f(x))$, i.e. the result does not require that the population conditional distribution of $y$ given $x$ is given by the logistic model. The theorem also does not imply that every margin-based loss function $\phi(v)$ is conformable to $F(\cdot)$. 
The result does have implications for how margin based classifiers can be interpreted, as discussed below.  

The  result supplies an easily interpretable  metric for the confidence
of a classification.  The relation between the margin and standardized logistic regression residuals implies $F(\hat f(x))$ is
a natural measure of the confidence of the classifier
$\hat C(x)=\mbox{sign}[\hat f(x)]$ for a given value of $x$.   Note that if the population minimizer for a given loss function 
is such that  $p(x)=G(f^*(x))\ne F(f^*(x))$, the confidence measure $F(\hat f(x))$ should not be interpreted as the conditional probability of $y^*=1$ for a given value of $x$. 

The result also provides a connection between soft and hard margin classifiers.  Soft classifiers estimate the conditional probability 
$\mbox{Pr}(y=1\mid f(x))$ and assign $C(x)=\mbox{sign}\{2\mbox{Pr}(y=1\mid f(x))-1\}$, whereas hard classifiers provide a decision boundary but do not necessarily estimate the conditional probability (\nocite{Yufeng2011} Yufeng et al. 2011).  
For any margin based estimation scheme that does not restrict values of the margin, we can approximate a soft classifier via $\mbox{Pr}(y=1\mid f(x))\approx F(\hat f(x))$. 
Of course it is possible to approximate a soft classifier using $\hat f(x)$ and any distribution function symmetric around zero.
The relation between the margin and logistic regression residuals supplies  motivation and justification for using the logistic distribution function for the approximation. 

There is a tension between loss functions in $\mathcal{L}_G$ where $G(x)\ne F(x)$ and Theorem \ref{key}. These loss functions yield $f^*(x)$ satisfying $p(x)=G(f^*(x))$.  On the other hand, Theorem \ref{key} 
implies margin-based loss is a distance-based loss for the logistic model, and therefore the approximation $p(x)\approx F(\hat f(x))$ seems reasonable.

\subsection{Partitioning logistic regression residuals}

The results in this subsection are primarily of interest in logistic regression settings, but they will also be used to provide 
a new perspective on the AdaBoost algorithm. The following theorem shows that standardized logistic regression residuals can be partitioned on a multiplicative scale.  Equivalently, the partition is additive on the logarithmic scale. 
The proof of the following theorem is an immediate consequence of \eqref{stdresid}. 

\begin{theorem}  Suppose that $f(x;\beta)$ is given as in \eqref{classFunctions}, i.e. $f(x;\beta)= \sum_{m=1}^M \theta_m b(x; \gamma_m)$ where $\beta=\{\theta_m,\gamma_m\}_{m=1}^M$.  Then
	\[
	S(f(x;\beta)) = (y^*)^{M+1} \prod_{m=1}^M S(\theta_m b(x;\gamma_m)).
	\]
	
	Then also
	\[
	S^2(f(x;\beta)) =  \prod_{m=1}^M S^2(\theta_m b(x;\gamma_m)).
	\]
	
\end{theorem}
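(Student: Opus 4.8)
The plan is to reduce everything to the single master identity \eqref{stdresid}, namely $S(g(x)) = y^* \exp[-\tfrac12 y^* g(x)]$, which holds for any real-valued function $g$ of the features. Once that identity is in hand, the entire theorem is a matter of bookkeeping the powers of $y^*$ together with the fact that $(y^*)^2 = 1$, since $y^*\in\{-1,1\}$.

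First I would apply \eqref{stdresid} termwise to each basis contribution, writing $S(\theta_m b(x;\gamma_m)) = y^* \exp[-\tfrac12 y^* \theta_m b(x;\gamma_m)]$. Forming the product over $m = 1, \dots, M$, the $M$ copies of the prefactor $y^*$ accumulate to $(y^*)^M$, while the exponents add; and because $f(x;\beta) = \sum_{m=1}^M \theta_m b(x;\gamma_m)$, the summed exponent collapses back to $-\tfrac12 y^* f(x;\beta)$. This yields
\[
\prod_{m=1}^M S(\theta_m b(x;\gamma_m)) = (y^*)^M \exp\left[-\tfrac12 y^* f(x;\beta)\right].
\]
Next I would apply \eqref{stdresid} directly to $f(x;\beta)$ itself, obtaining $S(f(x;\beta)) = y^* \exp[-\tfrac12 y^* f(x;\beta)]$. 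Comparing the two expressions, the exponential factors are identical, so the only discrepancy lies in the power of $y^*$: the product carries $(y^*)^M$ whereas $S(f(x;\beta))$ carries $(y^*)^1$. Multiplying the product by the stated prefactor $(y^*)^{M+1}$ produces $(y^*)^{2M+1}$, and since $(y^*)^2 = 1$ we have $(y^*)^{2M+1} = y^*$, recovering $S(f(x;\beta))$ exactly and establishing the first identity. Squaring both sides then gives the second, where the prefactor becomes $(y^*)^{2(M+1)} = 1$ and drops out, leaving $S^2(f(x;\beta)) = \prod_{m=1}^M S^2(\theta_m b(x;\gamma_m))$.

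There is no genuine analytic obstacle here; the result is a direct algebraic consequence of the exponential form in \eqref{stdresid}. The only point requiring care, which I would flag as the sole subtlety, is the accounting of the sign factor $y^*$: each of the $M$ factors in the product contributes its own $y^*$, so the correct prefactor is $(y^*)^{M+1}$ rather than a single $y^*$, and the collapse back to one $y^*$ (or to $1$ after squaring) relies essentially on $(y^*)^2 = 1$. Keeping that exponent bookkeeping straight is effectively the whole content of the proof.
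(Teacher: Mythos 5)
Your proof is correct and follows exactly the route the paper intends: the paper states the theorem is an immediate consequence of \eqref{stdresid}, and your argument simply carries out that bookkeeping, applying $S(g)=y^*\exp[-\tfrac12 y^*g]$ termwise, summing the exponents, and using $(y^*)^2=1$ to reconcile the power of $y^*$. Nothing is missing.
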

  
The theorem does not say that we can fit a logistic regression model by fitting $M$ individual models, each with one independent variable.  It says the standardized residuals from a model can be partitioned into individual components, where the estimated coefficients in the components are from the fit of the full model.  The full model could be fit by maximizing the likelihood or some other approach. 

The contribution of the $k$th element of $f(x;\beta)$ to the $i$th residual can be interpreted through the squared standardized residuals on the log-scale.  Note that
\[
\frac{\log S^2(\theta_k b(x_i;\gamma_k))}{\log S^2(f(x_i;\beta))}=\frac{\log S^2(\theta_k b(x_i;\gamma_k))}{\sum_{m=1}^M \log S^2(\theta_m b(x_i;\gamma_m))} = \frac{\theta_k b(x_i;\gamma_k)}{\sum_{m=1}^M\theta_m b(x_i;\gamma_m)}.
\]

The contribution of a component can also be interpreted by comparing a component to the geometric mean of all components on a log scale:  

\begin{equation*}
	\begin{split}
		\log \Bigg \{\frac{ S^2(\theta_k b(x_i;\gamma_k))}{ ( S^2(f(x_i;\beta)))^{1/M}}\Bigg \} & =\log \Bigg \{\frac{ S^2(\theta_k b(x_i;\gamma_k))}{\prod_{m=1}^M ( S^2(\theta_m b(x_i;\gamma_m)))^{1/M}} \Bigg\} \\
		& = y^*_i\Big (\theta_k b(x_i;\gamma_k)-\frac{1}{M}\sum_{m=1}^M\theta_m b(x_i;\gamma_m) \Big ).
	\end{split}
\end{equation*}

Potential uses of the partition are histograms of the individual components across observations to identify outliers, or for each observation, histograms of the $M$ components to understand the influence of the predictors on the residual.  

\section{Loss functions conformable for the logistic distribution}\label{LogistSection}

In this section we consider loss functions interpreted on the logit scale at the population level, i.e.  we consider $G(x)=F(x)=(1+e^{-x})^{-1}$. In the theory of generalized linear regression models, it is well-known that the logistic distribution is the (inverse) canonical link for binary data.    The logistic distribution could also be considered the canonical distribution for construction of loss functions in binary margin-based classification for a couple of reasons.   First, the odds  $q(w)$ arise naturally in the solution to minimizing the conditional  risk, and the log-odds is the inverse of the logistic distribution: $F^{-1}(w)=\log[F(w)/(1-F(w))]=\log q(w)$. Second, as shown in Section \ref{Sec:marginresid} above, the margin is a function of the standardized logistic regression residual. Properties of the (derivative of) conditional risk and logistic distribution translate into analytical tractability and ease of interpretation. For these reasons, the logistic distribution deserves detailed exploration. 

From Example \ref{logisticexample}, 
\[
q(w)^{-1/2}=\Bigg (\frac{1-F(w)}{F(w)}\Bigg )^{1/2} = e^{-w/2}
\]
and the class of loss functions $\phi(v)\in\mathcal{L}_F$ can be written as
\[
\phi(v) = k-\int_{0}^{v} e^{-w/2}g(w) dw,
\]
showing that loss functions in $\mathcal{L}_F$ can be considered as smoothed, weighted versions of the exponential loss function $e^{-w/2}$.   This observation aids in interpretation and 
comparison of loss functions in $\mathcal{L}_F$.

Several of the most commonly used loss functions for binary classification and regression are in $\mathcal{L}_F$, including exponential loss and logisitc  loss (also known as log-likelihood loss, cross entropy loss, and deviance loss). In addition to their use in regression contexts, these two loss functions are the basis of the
popular boosting methods AdaBoost and LogitBoost. Table \ref{logistTable} shows several common loss functions 
in $\mathcal{L}_F$ and
the weight functions $g(w)$ used to generate them. Two new loss functions, Gaussian and Laplace loss, are also included.  These latter two loss functions are explored further below. 

\subsection{Convexity for logistic conformable loss functions}

For the loss functions in Table \ref{logistTable}, exponential and logistic loss are convex, whereas Savage, Gaussian and Laplace loss are not.  These loss functions differ only with respect to the choice of $g(w)$, and therefore properties of $g(w)$ determine the convexity of the loss function.  From Lemma \ref{convex}, it follows that convexity of  $\phi(v)\in\mathcal{L}_F$ occurs if and only if  $\frac{d}{dw}\log g(w)\le 1/2$.   In other words, convexity is  determined by the maximum rate at which $\log [g(w)]$ changes over the real line.  Note this convexity result assumes the parameterization in terms of $q(w)$. 

If the inequality is satisfied, we say $\log[g(w)]$ is Lipschitz continuous with bound $1/2$. 
If  $\log [g(w)]$ is Lipschitz continuous with bound $M/2$ where $1/2<M/2<\infty$, then $\tilde g(w)= g(w)^{1/M}$ is Lipschitz continuous with bound $1/2$ and $\tilde g(w)$ will yield  a convex loss function.  For example,
from Table \ref{logistTable}, Savage loss (see Masnadi-Shirazi and Vasconcelos (2008)\nocite{SV2008}) is obtained when $g(w)= (F(w)(1-F(w)))^{3/2}$.  It follows
that $\frac{d}{dw}\log g(w) = -3/2 +3(1-F(w))<3/2$ where the upper bound ($M/2=3/2$) is sharp. 
Therefore Savage loss is non-convex, but we can modify the weight function to obtain a convex loss. 
With $M=3$, let $\tilde g(w)= g(w)^{1/M}=\Big ( (F(w)(1-F(w)))^{3/2} \Big )^{1/3}=(F(w)(1-F(w)))^{1/2}$.
Note that $\tilde g(w)$ is Lipschitz continuous with bound $1/2$.  The resulting loss function is the logistic likelihood loss (well-known to be convex).  See Figure \ref{logistFigure} for a graphical comparison of 
weight functions.

  \bigskip

  \begin{tabular}{lrrr}
  	\toprule
  	\textbf{Name} & $\phi(v)$ & $g(w)$ & $k$ \\ 
  	\midrule
  	Exponential  & $e^{-\frac{1}{2}v}$ & 1/2 & $1$ \\
  	Logistic  & $\log(1+e^{-v})$ & $\big ( F(w)(1-F(w))\big )^{1/2}$ & $2$ \\
  	Savage  & $(1+e^{v})^{-2}$ & $\big ( F(w)(1-F(w))\big )^{3/2}$ & $2$ \\
  	Gaussian & $1-\Phi(\frac{v+m/2}{\sqrt{m}})$ &  $\frac{1}{\sqrt{2\pi m}}e^{-\frac{1}{2m}w^2-m/8}$ & $1-\Phi(\frac{\sqrt{m}}{2})$ \\
  	Laplace &  $ \begin{cases}
  		e^{-v(1+m)/2}, & \text{if $v>0$} \\
  		1+\frac{m+1}{m-1}(1-e^{v(m-1)/2}), & \text{if $v\le 0$}
  	\end{cases}$ & $\frac{m+1}{2}e^{-m|w|}$ & $1$ \\
  	\bottomrule
  \end{tabular}
{\captionof{table}{Loss functions for the logistic model and the indicated generating weight function $g(w)$ under
		the odds ($q(w)$) parameterization.   The loss functions all result in $\mbox{Pr}(y^*=1\mid x) = F(f^*(x))$.}\label{logistTable} }
  \bigskip
  
  \begin{figure}[h]
 \includegraphics[scale=0.8]{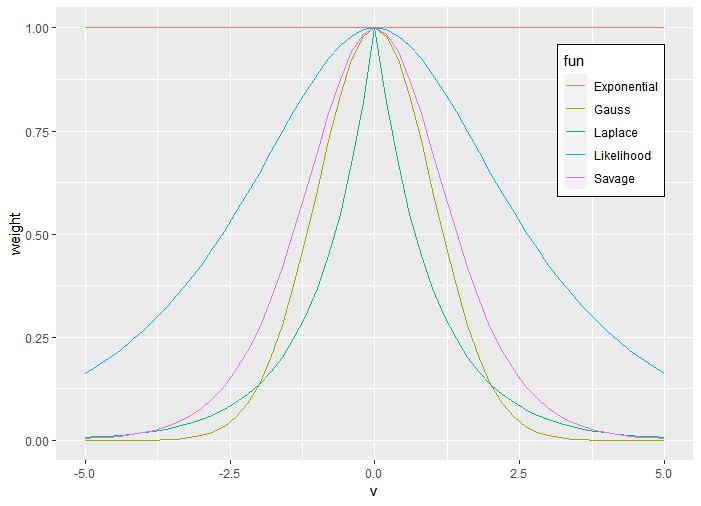}
 \caption{Weight functions $g(v)$ for loss functions in Table \ref{logistTable}. The functions are scaled to have maximum equal to 1. }
 	\label{logistFigure}
\end{figure}
  The following sub-sections explore Gaussian and Laplace loss, as these loss functions appear to be new.
  
  \subsection{Gaussian Loss}
  
  The Gaussian loss function is $\phi_{\mbox{GA}}(v)=1-\Phi(\frac{v+m/2}{\sqrt{m}})$ where $m>0$ and $\Phi(\cdot)$ represents the standard
  normal cumulative distribution function. Gaussian loss is obtained by selecting as a weight function a scaled Gaussian (normal) density; $g(w)=\frac{1}{\sqrt{2\pi m}}e^{-\frac{1}{2m}w^2-m/8}$, see Table \ref{logistTable}. Note that the Gaussian loss function is conformable for the Logistic CDF. In
  the regression context, the Gaussian loss function can be used to obtain consistent estimates of logistic regression coefficients (see Corollary \ref{regressresult}).     The loss function is non-convex, 
  bounded between 0 and 1, and therefore is a smooth approximation to 0-1 loss. 
  
  The shift of $m/2$ occurs automatically from the definition of $g(w)$.  A loss function with a shift of zero  would be outside of $\mathcal{L}_F$, and would result 
  in a degenerate loss function-- we would not be able to consistently estimate a prediction rule, or consistently estimate regression parameters in a regression context.   
  
  An interesting aside:  Setting $m=1$, $\phi_{\mbox{GA}}(v)=1-\Phi(v+1/2)\in\mathcal{L}_F$ while  the Gaussian likelihood loss $\phi(v)= -\log\Phi(v)\in\mathcal{L}_\Phi$. The two loss functions appear very similar, but  conform to different distribution functions.  From a regression parameter estimation perspective, the former loss provides consistent estimators for logistic regression, and the latter for Probit regression. 
  
  The following  provides an example of a loss function closely related to Gaussian loss that has been studied in the context of addressing covariate
  measurement error in logistic regression.
  \begin{example}\label{Buzasexample}   \nocite{Buzas2009} Buzas (2009) considered a modified likelihood score for the logisitc regression model for purposes of deriving an estimating score
  	that remains unbiased in the presence of additive, normal covariate measurement error.  In terms of the class $ \mathcal{\tilde C}_q$, the modified score was defined with weight function
  	\[
  	g(f(x:\beta)) = \frac{(1/m)\Phi^\prime( f(x;\beta)/m)}{\sqrt{ F(f(x;\beta)[1-F(f(x;\beta))]}}
  	\]
  	where $\Phi^\prime (\cdot)$ is the derivative standard normal CDF and $m$ is a scaling constant.  The weight function differs from the Gaussian weight given in Table \ref{logistTable} by the inclusion of $\sqrt{ F(f(x;\beta)[1-F(f(x;\beta))]}$ in the denominator.   By Lemma \ref{classlemma},
  	the derivative of the resulting loss is in $\tilde C_q$ and therefore the loss is in $\mathcal{L}_F$. 
  \end{example}

  \subsection{Laplace Loss}

  The Laplace loss function given in Table \ref{logistTable} is obtained using the kernal of the Laplace
  density function as the weight function $g(w)$.  The resulting loss function, denoted $\phi_L(v)$, is not convex.   The Laplace loss
  function is similar to the Tukey bi-weight function and Huber-type loss functions in that the loss is approximately quadratic for small residuals (and $m\approx 1$), and then tapers off as the magnitude of the residuals increases.   This is seen as follows. 
  
    In terms of standardized logistic residuals, the Laplace loss function ($m>1$) is 
\[
\tilde\phi_L(S)=\phi_L(-\log S^2)=
 \begin{cases}
	|S|^{m+1}, & \text{if $|S|<1$} \\
	1+\frac{m+1}{m-1}(1-\frac{1}{|S|^{m-1}}), & \text{if $|S|\ge 1$}.
\end{cases}
\] 
Similar to Tukey's bi-weight loss, the functional form of the penalty depends on the size of the residual, and the penalty is 
bounded for large $|S|$, see Figure \ref{LaplaceFigure}.

 \begin{figure}[h]
	\includegraphics[scale=0.8]{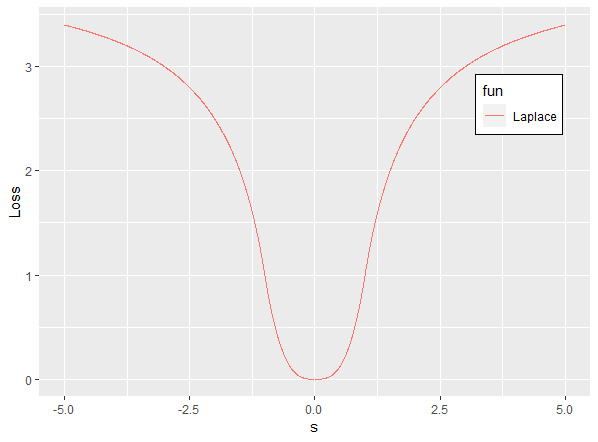}
	\caption{Laplace loss as a function of the standardized logistic regression residual $S$ ($m=2$). }
	\label{LaplaceFigure}
\end{figure}   
Through the choice of $m$, Laplace loss can be made arbitrarily close to 0-1 loss.  It is easily seen that 
\[
\lim_{m\rightarrow\infty}\tilde\phi(S^2)=\begin{cases}
0, & \text{if $|S|<1$} \\
2, & \text{if $|S|\ge 1$}.
\end{cases}
\]
In other words, for large $m$, Laplace loss should closely mimic 0-1 loss.  In a regression context, Laplace loss provides consistent
estimators for the logistic regression model.
The effectiveness of Laplace loss for mitigating the influence of outliers in regression and classification settings
will be studied in future work.
\bigskip

\subsection{Exponential loss}\label{exploss}
 
 Here we explore new interpretations of exponential margin-based loss through the lens of standardized logistic regression residuals.

 In the literature, the standard form for exponential loss is $\phi_E(v)=e^{-v}$ and that is the form used below.  This differs from the form of exponential loss given in Table \ref{logistTable} in that the exponent is not divided by 2. Technically $\phi_E(v)\notin \mathcal{L_F}$ because of the re-scaling in the exponent.  Of course the two forms are equivalent in terms of classification (and easily made so for regression).  
 
 It immediately follows from Theorem \ref{key} that
 $\phi_E(y^*f(x))=S^2$.  Then the empirical risk for exponential loss is equivalent to the (scaled) sum of squared standarized residuals:
 \[
 \frac{1}{n}\sum_{i=1}^n \phi_E(y^*_i f(x_i)) = \frac{1}{n}\sum_{i=1}^n S_i^2
 \] 
 where $S_i^2=S^2(f(x_i))$.
In words, {\it estimation of $f(x)$ through minimizing margin-based exponential loss is equivalent to minimizing the sum of squared standardized logistic regression residuals}.    
 
 \subsubsection{Sensitivity to outliers}
 
 A criticism of exponential loss is that it is sensitive to outliers.  The outlier sensitivity can be explored by examining the behavior of the classifier using the $p$-norm of the standardized logistic residuals. Let  $\mathbf{S}=(S_1,S_2,\dots,S_n)$ and suppose $f(x;\beta)$ is of the form given in \eqref{classFunctions}.  The $p$-norm of $\mathbf{S}$ is 
 
 \begin{equation}\label{pnorm}
 	\left\lVert \mathbf{S}\right\rVert_p =  \left\{\sum_{i=1}^n |S_i|^p \right\}^{\frac{1}{p}} = \left\{\sum_{i=1}^n \exp(-Y^*_if(x_i;\beta) p/2)\right \}^{\frac{1}{p}}.
 \end{equation}
 
 Let $ \hat\beta_p =\arg \min_{\{\beta\}} \left\lVert \mathbf{S}\right\rVert_p^p $ and $\hat f_p(x)=f(x;\hat\beta_p)$.  Estimation with exponential loss is equivalent to choosing $p=2$.
 The following lemma and corollary indicate that the value of $p$ essentially has no impact on estimation or classification. 
 
 \begin{lemma}\label{norm}  The estimates $\hat f_p(x)$ and $\hat f_l(x)$  minimizing the $p$ and $l$ norms of $\mathbf{S}$ have the following relation:
 	
 	\[
 	\hat f_p(x)=\frac{l}{p}\hat f_{l}(x).
 	\]

 \end{lemma}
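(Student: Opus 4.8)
The plan is to exploit the explicit exponential form of the objective recorded in \eqref{pnorm}, namely $\left\lVert \mathbf{S}\right\rVert_p^p = \sum_{i=1}^n \exp(-y_i^* f(x_i;\beta)p/2)$, which itself follows from Theorem \ref{key}. The single structural fact I would use is that the function class \eqref{classFunctions} is closed under multiplication by a scalar: if $f(x;\beta)=\sum_{m=1}^M \theta_m b(x;\gamma_m)$ is admissible, then so is $c f(x;\beta)=\sum_{m=1}^M (c\theta_m) b(x;\gamma_m)$ for any $c\in\Re$, obtained by rescaling the linear coefficients $\theta_m$ while leaving the nonlinear parameters $\gamma_m$ fixed. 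Thus the set of functions attainable as $\beta$ varies is invariant under multiplication by any nonzero scalar.

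Next I would perform a change of variables linking the two objectives. Writing $\Psi_p(f)=\sum_{i=1}^n \exp(-y_i^* f(x_i)p/2)$ for the $p$-norm criterion (so that $\left\lVert \mathbf{S}\right\rVert_p^p=\Psi_p(f)$), a direct substitution gives $\Psi_p(f)=\Psi_l\!\left(\tfrac{p}{l}f\right)$, because replacing $f$ by $\tfrac{p}{l}f$ inside $\Psi_l$ turns the exponent $-y_i^*(\tfrac{p}{l}f(x_i))l/2$ into $-y_i^* f(x_i)p/2$. Hence minimizing $\Psi_p$ over the class is the same problem as minimizing $\Psi_l$ over the image of the class under $f\mapsto\tfrac{p}{l}f$; but by scale-invariance that image is again the whole class.

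I would then conclude as follows. If $\hat f_l$ minimizes $\Psi_l$, set $\tilde f=\tfrac{l}{p}\hat f_l$, which is admissible by scale-invariance. Then $\Psi_p(\tilde f)=\Psi_l\!\left(\tfrac{p}{l}\tilde f\right)=\Psi_l(\hat f_l)$, and for any admissible competitor $f_0$ the rescaled function $\tfrac{p}{l}f_0$ satisfies $\Psi_l\!\left(\tfrac{p}{l}f_0\right)=\Psi_p(f_0)$, so $\Psi_p(f_0)\ge \Psi_l(\hat f_l)=\Psi_p(\tilde f)$; thus $\tilde f$ minimizes $\Psi_p$, giving $\hat f_p=\tfrac{l}{p}\hat f_l$. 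I would emphasize that this scaling holds at every $x$, not merely at the sampled points, precisely because rescaling the coefficients $\theta_m$ rescales $f(x;\beta)$ pointwise.

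The main obstacle I anticipate is bookkeeping around uniqueness of the minimizer: the change-of-variables argument shows that $\tfrac{l}{p}\hat f_l$ is \emph{a} minimizer of the $p$-norm objective, and the stated identity $\hat f_p=\tfrac{l}{p}\hat f_l$ is exact only when the minimizer is unique (or when corresponding minimizers are selected on the two sides). I would state this uniqueness assumption explicitly, or phrase the conclusion as an identification between the solution sets of the two problems, so that the equality is rigorous.
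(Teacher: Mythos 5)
Your argument is correct and takes essentially the same route as the paper: the paper's proof simply observes from \eqref{pnorm} that $\hat f_p(x) = (1/p)\hat f_1(x)$ and $\hat f_l(x) = (1/l)\hat f_1(x)$ and combines the two, which is exactly your change of variables $\Psi_p(f)=\Psi_l\big((p/l)f\big)$ specialized to $l=1$. Your explicit attention to the closure of the class \eqref{classFunctions} under scalar rescaling of the coefficients $\theta_m$, and to the uniqueness of the minimizer, spells out two points the paper leaves implicit but does not change the substance of the argument.
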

 \begin{proof}  The lemma follows by noting that, from inspection of  \eqref{pnorm},  $\hat f_p(x) = (1/p)\hat f_1(x)$ and
 	$\hat f_l(x) = (1/l)\hat f_1(x)$.   Combining the two identities gives the result.
 \end{proof}
The lemma implies that the value of the quantity 
 $\{\min_{\beta} \left\lVert \mathbf{S}\right\rVert_p^p \}$
 is independent of the value for $p$.

 The following Corollary establishes that the classifier resulting from minimizing the $p$-norm does not depend on $p$.  The Corollary
 follows directly from Lemma \ref{norm}. 
 \begin{corollary}  
 	Let $\hat C_p(x)=\mbox{sign}\{\hat f_p(x)\}$.  Then $\hat C_p(x)=\hat C_l(x)$ for
 	all $0<p,l<\infty$.
 \end{corollary}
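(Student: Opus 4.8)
The plan is to derive the Corollary directly from Lemma \ref{norm}, exploiting the scale-invariance of the sign function. Lemma \ref{norm} already supplies the essential content: for any $0<p,l<\infty$ the two minimizers are related by $\hat f_p(x)=\tfrac{l}{p}\,\hat f_l(x)$, a pointwise identity in $x$. The only additional fact I need is that the sign of a real number is unchanged when it is multiplied by a strictly positive constant, i.e. $\mbox{sign}[cz]=\mbox{sign}[z]$ whenever $c>0$.

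Concretely, I would proceed as follows. First, invoke Lemma \ref{norm} to write $\hat f_p(x)=\tfrac{l}{p}\,\hat f_l(x)$. Second, observe that the scalar $\tfrac{l}{p}$ is strictly positive because $0<p,l<\infty$. Third, apply the definition $\hat C_p(x)=\mbox{sign}\{\hat f_p(x)\}$ and substitute the identity to obtain
\[
\hat C_p(x)=\mbox{sign}\Big\{\tfrac{l}{p}\,\hat f_l(x)\Big\}=\mbox{sign}\{\hat f_l(x)\}=\hat C_l(x),
\]
where the middle equality uses positivity of $\tfrac{l}{p}$. Since $p$ and $l$ were arbitrary in $(0,\infty)$, this establishes the claim for all such pairs.

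There is essentially no obstacle here; the substantive work was carried out in Lemma \ref{norm}, and this Corollary is a one-line consequence. The only point warranting a moment's care is the boundary behavior of $\mbox{sign}$: if $\hat f_l(x)=0$ for some $x$, then the identity forces $\hat f_p(x)=0$ as well, so both classifiers take the same (indeterminate) value there and the equality $\hat C_p(x)=\hat C_l(x)$ still holds verbatim. Hence the conclusion requires no exclusion of such points, and the argument is complete.
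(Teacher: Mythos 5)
Your proof is correct and matches the paper's approach exactly: the paper likewise states that the Corollary follows directly from Lemma \ref{norm} via the relation $\hat f_p(x)=\tfrac{l}{p}\hat f_l(x)$ and the positivity of the scaling factor. Your additional remark about the zero set of $\hat f_l$ is a harmless refinement the paper does not bother with.
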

 
 Additional insight from the lemma and corollary can be had by examining the limit as $p$ approaches infinity.  For large $p$,
 \[
 \left\lVert \mathbf{S}\right\rVert_p \approx  \left\lVert \mathbf{S}\right\rVert_\infty \equiv\max_i |S_i|.
 \]
 This approximation provides further insight into the common observation that exponential loss is sensitive to
 outliers.  As shown above, estimation and classification are not dependent on the value of $p$. Then we can consider $p$ large and conclude that exponential loss is, to a close approximation, selecting $f(x;\beta)$ to minimize the maximum of the absolute value of SLRRs, or equivalently to minimize
 the maximum of the magnitude of margins that are negative.

\subsubsection{Exponential risk invariance}

All loss functions $\phi(v)\in\mathcal{L}_F$ have the same optimal value for $f(x)$  at the population level, which will be denoted $f_F^*(x)$.   Note that $f^*_F(x)$ satisfies
$F(f_F^*(x))=p(x)$.

 Denote the conditional risk for exponential loss as $R_{CE}(f(x))=E[\phi_E(y^*f(x))\mid x]=E[e^{-y^*f(x)}\mid x] $. The following Theorem states that the conditional and unconditional risk for exponential loss evaluated at $f_F^*(x)$ is
 invariant to the joint distribution for $(y^*,x)$. 
 This surprising result is perhaps less so when we think about margin-based  exponential loss as the square
 of the standardized logistic regression residuals, and with the understanding that standardized residuals have mean zero and variance one.

 \begin{theorem}  The exponential conditional  risk  evaluated at $f_F^*(x)$ satisfies
 	\[
 	R_{CE}(f_F^*(x)) = 1
 	\]
 	over any joint distribution on $(y^*,x)$. 
 	The conditional (and unconditional) risk is therefore independent of the conditional distribution $p(x)$ and the marginal distribution for $x$.

 \end{theorem}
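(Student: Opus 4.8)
The plan is to exploit the identity from Theorem~\ref{key}, which gives $\phi_E(y^*f(x)) = e^{-y^*f(x)} = S^2(f(x))$ for \emph{any} $f$. The exponential conditional risk is then simply $R_{CE}(f(x)) = E[S^2(f(x))\mid x]$, and the claim reduces to showing that the conditional second moment of the standardized logistic residual, evaluated at the population minimizer $f_F^*(x)$, equals one. This is precisely the quantitative content of the remark that standardized residuals have mean zero and variance one.

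First I would identify $f_F^*(x)$ explicitly. By definition it satisfies $F(f_F^*(x)) = p(x)$, i.e.\ it is the log-odds $f_F^*(x) = \log[p(x)/(1-p(x))]$. The crucial observation is that at this point the fitted logistic probability $F(f_F^*(x))$ coincides with the true conditional mean $E[y\mid x] = p(x)$, where $y=(y^*+1)/2$. Substituting $F(f_F^*(x)) = p(x)$ into the definition $S(f(x)) = (y - F(f(x)))/\sqrt{F(f(x))(1-F(f(x)))}$ gives
\[
S(f_F^*(x)) = \frac{y - p(x)}{\sqrt{p(x)(1-p(x))}},
\]
which is genuinely the centered and scaled version of the Bernoulli variable $y$. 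Taking the conditional expectation of its square and using $E[(y-p(x))^2\mid x] = \operatorname{Var}(y\mid x) = p(x)(1-p(x))$ yields $E[S^2(f_F^*(x))\mid x] = 1$, so $R_{CE}(f_F^*(x)) = 1$ pointwise in $x$. As a direct check one may instead substitute $e^{-f_F^*(x)} = (1-p(x))/p(x)$ into $E[e^{-y^*f_F^*(x)}\mid x] = p(x)e^{-f_F^*(x)} + (1-p(x))e^{f_F^*(x)}$ and watch the two terms collapse to $(1-p(x)) + p(x) = 1$.

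Since the conditional risk is identically equal to one, the unconditional risk follows immediately by the tower property: $E[\phi_E(y^*f_F^*(x))] = E_x[R_{CE}(f_F^*(x))] = E_x[1] = 1$, with no reference to either the conditional law $p(x)$ or the marginal law of $x$. I do not anticipate a genuine technical obstacle here; the only point requiring care is conceptual rather than computational. Theorem~\ref{key} is a purely algebraic identity valid for every $f$, so $S$ need not be a bona fide standardized residual in general---its variance-one property is recovered \emph{precisely} because $f_F^*(x)$ is the point at which the logistic fitted probability matches the true $p(x)$. It is exactly this matching that makes the otherwise surprising cancellation automatic, and that explains why the result is insensitive to the underlying joint distribution on $(y^*,x)$.
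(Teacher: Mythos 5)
Your proof is correct and follows essentially the same route as the paper's: invoke the identity $e^{-y^*f(x)}=S^2(f(x))$ from Theorem~\ref{key}, use $F(f_F^*(x))=p(x)$ to recognize $S(f_F^*(x))$ as the centered, scaled Bernoulli variable with conditional variance one, and finish with the tower property. The only addition is your direct verification via $p(x)e^{-f_F^*(x)}+(1-p(x))e^{f_F^*(x)}=1$, which is a nice sanity check but not a different argument.
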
 
 
 \begin{proof} Recall that $F(f_F^*(x))=p(x)$.  Then the conditional risk is given by 
 	\begin{equation*}
 		\begin{split}
 			R_{CE}(f^*_F(x)) & = E[\phi_E(y^*f_F^*(x))\mid x] = E[e^{-y^*f_F^*(x)}\mid x] \\
 			& = E[S^2(f^*_F(x))\mid x]=E\Bigg [\frac{(y-p(x))^2}{p(x)(1-p(x))} \mid x \Bigg ]=1
 		\end{split}
 	\end{equation*}
 	where the last equality follows by noting the expectation is with respect to the underlying conditional distribution $p(x)$, and 
 	that the conditional variance of $y$ is $p(x)(1-p(x))$. 
 	The unconditional risk
 	is $R_E(f^*_F(x))=E[R_{CE}(f^*_F(x))]= E[E[e^{-y^*f^*_F(x)}\mid X]]=1$, regardless of the distribution for $x$. 
 \end{proof}

 The theorem suggests a way of assessing model fit when using any loss function  $\phi(v)\in\mathcal{L}_F$.  Let $\hat f_\phi(x)$ minimize the empirical loss for  $\phi(v)\in\mathcal{L}_F$.   Define the exponential empirical risk as 
 \[
 R_{\mbox{Emp}}(\hat f_\phi(x)) = \frac{1}{n}\sum_{i=1}^n \exp\{-y^*_i\hat f_\phi(x_i)\}.
 \]
 The exponential empirical risk should be  approximately equal to one when we have not over or under fit the data.  Empirical risks greater than 1
 suggest the model is not fit well, and below 1 could suggest over fitting. Additionally, the performance of different loss functions
 in $\mathcal{L}_F$ could be compared via $R_{\mbox{Emp}}$.  AdaBoost uses exponential loss and
 forward stage-wise regression.  The  empirical risk could be evaluated at each iteration, with iterations stopping when $R_{\mbox{Emp}}\approx 1$.  The utility of using $R_{\mbox{Emp}}$ to assess model fit will be explored in future work.

 \subsection{Logistic loss} 
 
  In this sub-section, a new perspective on logistic loss and logistic regression are given in terms of standardized logistic regression residuals.   Recall that logistic loss is defined as $\phi_L(y^*f(x))=\log \left (1+e^{-y^*f(x)}\right )$.

 In Section \ref{exploss}, we showed that minimizing  empirical margin-based exponential loss is equivalent to minimizing the arithmetic mean of squared 
 SLRRs.  Here we show that optimizing margin-based logistic loss (or maximum likelihood estimation if the logistic model is assumed) is equivalent
 to minimizing the geometric mean of (shifted) squared SLRRs.

     The value of $f(x)$ minimizing logistic loss is equivalent to the value of $f(x)$ minimizing the geometric mean of  the values $\{1+S^2_i\}_{i=1}^n$.  That is,
 	\[
 	\arg\min_{f\in \mathcal{F}} \frac{1}{n} \sum_{i=1}^n \phi_L(y_i^*f(x_i)) 
 	= \arg\min_{f\in \mathcal{F}} \left (\prod_{i=1}^n (1+S^2_i)\right )^{1/n}.
 	\] 
 The identity follows readily from Theorem \ref{key} and the definition of the logistic likelihood loss $\phi_L(y^*f(x))$. 	 In terms of regression, the result says that logistic regression maximum likelihood can be thought of as choosing the regression parameters to minimize the geometric mean of shifted, squared SLRRs.  Therefore logistic regression via maximum likelihood is a type of least squares regression.

\subsection{Another perspective on AdaBoost}

The enormously successful boosting algorithm AdaBoost can be understood as forward stagewise additive regression using 
exponential loss, see Friedman et al. (2000)\nocite{Friedman2000}.  Here we argue that AdaBoost  can be understood as forward stagewise 
additive logistic regression, where the objective function is a weighted sum of squared logistic regression residuals, with residuals from prior fits as
the weights.   The development here follows that in Hastie et al. (2009)\nocite{ESL2009}, see pages 343-344.

The basis functions in AdaBoost are weak classifiers $G_m(x)\in\{-1,1\}$. In Friedman et al. (2000) \nocite{Friedman2000}, it is shown that the
$m$th iteration of AdaBoost consists of solving
\begin{equation}\label{adaboost}
	(\theta_m,G_m) = \arg\min_{\theta,G}\sum_{i=1}^n \exp[-y_i^*(f_{m-1}(x_i)+\theta G(x_i))]
\end{equation}  
where $f_{m-1}(x)=\sum_{k=1}^{m-1}\hat\theta_k G_k(x)$ is the sum of the weighted weak classifiers selected in the previous
$m-1$ iterations of the algorithm.  Using the relations between margins and SLRRs established above, it follow that \eqref{adaboost} can 
be written as 
\begin{equation*}\label{adaboost2}
	\begin{split}
	(\theta_m,G_m) & = \arg\min_{\theta,G}\sum_{i=1}^n S^2(f_{m-1}(x_i)+\theta G(x_i)) = \arg\min_{\theta,G}\sum_{i=1}^n  \left [ S^2(f_{m-1}(x_i)) S^2(\theta G(x_i)) \right ]\\
	& =  \arg\min_{\theta,G}\sum_{i=1}^n \left [ \left ( \prod_{k=1}^{m-1}S^2(\hat\theta_kG_k(x_i)) \right ) S^2(\theta G(x_i)) \right ].
	\end{split}      
\end{equation*}  
At each iteration, AdaBoost is minimizing the sum of squared SLRRs, weighted by 
the squared residuals from prior iterations.

\section{Conclusion}

This paper established relations between margin-based loss functions, likelihoods, and logistic regression models. A simple derivation of a characterization of conformable margin-based loss functions was presented. Using the characterization, the
derivative of a large class of margin-based loss functions was shown to be equivalent to weighted likelihood scores.  Additionally, the
margin itself is a function of squared standardized logistic regression residuals, and therefore so too are all
margin-based loss functions.  These relations provide new perspectives on margin-based classification methods, and further 
establish connections between classification via soft or hard classifiers, and regression parameter estimation.  

The simple characterization of margin-based loss functions requires differentiability of the loss, excluding at least
one important loss.  Hinge loss, used in support vector machines, is Bayes consistent, but is not differentiable and therefore not in the class $\mathcal{L}_G$.    Hinge loss can be expressed in terms of squared standardized logistic regression residuals.  However, the effectiveness of hinge loss is probably most easily understood in terms of margins.

\vspace*{0.1in}

\end{document}